\def\eqref#1{equation~\ref{#1}}
\def\1{\bm{1}}
\DeclareMathAlphabet{\mathsfit}{\encodingdefault}{\sfdefault}{m}{sl}
\SetMathAlphabet{\mathsfit}{bold}{\encodingdefault}{\sfdefault}{bx}{n}
\DeclareMathOperator*{\argmax}{arg\,max}
\newtheorem{thm}{Theorem}
\newtheorem*{thm*}{Theorem}
\newtheorem{lem}[thm]{Lemma}
\theoremstyle{definition}
\newtheorem{de}[thm]{Definition}
\newtheorem{assumption}{Assumption}
\newtheorem*{assumption*}{Assumption}
\newcommand{\bn}{\boldsymbol{n}}
\newcommand{\bx}{\boldsymbol{x}}
\newcommand{\by}{\boldsymbol{y}}
\newcommand{\bZ}{\boldsymbol{Z}}
\newcommand{\bz}{\boldsymbol{z}}
\newcommand{\balpha}{\bm{\alpha}}
\newcommand{\btheta}{\boldsymbol{\theta}}
\newcommand{\mbbR}{\mathbb{R}}
\newcommand{\norm}[2]{\left\| #1 \right\|_{#2}}
\newcommand{\mb}{\boldsymbol}
\newcommand{\rev}[1]{{\color{black}{#1}}}
\title{Adversarial Adaptive Sampling: Unify PINN and Optimal Transport for the Approximation of PDEs}
\author{%
  Kejun Tang\thanks{Co-first Author}, Jiayu Zhai\footnotemark[1], Xiaoliang Wan$^{\Diamond}$, Chao Yang$^{\S}$\\
  PKU-Changsha Institute for Computing and Digital Economy\\
  Institute of Mathematical Sciences, ShanghaiTech University\\
  $^{\Diamond}$Department of Mathematics and Center for Computation \& Technology, Louisiana State University\\
  $^{\S}$School of Mathematical Sciences, Peking University \& PKU-Changsha Institute for Computing and Digital Economy \\
  \texttt{tangkejun@icode.pku.edu.cn, zhaijy@shanghaitech.edu.cn} \\ 
  \texttt{xlwan@math.lsu.edu, chao\_yang@pku.edu.cn}
}
\begin{document}

\maketitle

\begin{abstract}
Solving partial differential equations (PDEs) is a central task in scientific computing. Recently, neural network approximation of PDEs has received increasing attention due to its flexible meshless discretization and its potential for high-dimensional problems. One fundamental numerical difficulty is that random samples in the training set introduce statistical errors into the discretization of the loss functional which may become the dominant error in the final approximation, and therefore overshadow the modeling capability of the neural network. In this work, we propose a new minmax formulation to optimize simultaneously the approximate solution, given by a neural network model, and the random samples in the training set, provided by a deep generative model. The key idea is to use a deep generative model to adjust the random samples in the training set such that the residual induced by the neural network model can maintain a smooth profile in the training process. Such an idea is achieved by implicitly embedding the Wasserstein distance between the residual-induced distribution and the uniform distribution into the loss, which is then minimized together with the residual. A nearly uniform residual profile means that its variance is small for any normalized weight function such that the Monte Carlo approximation error of the loss functional is reduced significantly for a certain sample size. The adversarial adaptive sampling (AAS) approach proposed in this work is the first attempt to formulate two essential components, minimizing the residual and seeking the optimal training set, into one minmax objective functional for the neural network approximation of PDEs. 
\end{abstract}

\section{Introduction}\label{sec_intro}
Partial differential equations (PDEs) are widely used to model physical phenomena. Typically, obtaining analytical solutions to PDEs is intractable, and thus numerical methods (e.g., finite element methods \citep{elman2014finite}) have to be developed to approximate the solutions of PDEs. However, classical numerical methods can be computationally infeasible for high-dimensional PDEs due to the curse of dimensionality or computationally expensive for parametric low-dimensional PDEs \citep{xiu2003modeling, ghosh2022inverse, yin2023aonn, zhai2022deep}.
To alleviate these difficulties, machine learning (ML) techniques, e.g., physics-informed neural networks (PINN) \citep{raissi2019physics} and deep Ritz method \citep{weinan2018deep}, have been adapted to approximate PDEs as surrogate models and have received increasing attention \citep{han2018solving,zhu2018bayesian,zhu2019physics, weinan2021dawning, karniadakis2021physics}.  
The basic idea of deep learning methods for approximating PDEs is to encode the information of PDEs in neural networks through a proper loss functional, which will be discretized by collocation points in the computational domain and subsequently minimized  to determine an optimal model parameter \citep{raissi2019physics,weinan2018deep,sirignano2018dgm,zhu2019physics}. 

The collocation points are crucial to effectively train neural networks for PDEs because they provide an approximation of the loss functional. In the community of computer vision or natural language processing, it is well known that the performance of ML models is highly dependent on the quality of data (i.e., the training set). Similarly, if the selected collocation points 
fail to yield an accurate approximation of the loss functional, it is not surprising that the trained neural network will suffer a large  generalization error, especially when the solution has low regularity or the problem dimension is large. 
As shown in \citep{tangdas, wu2023comprehensive}, if the collocation points in the training set are refined according to a proper error indicator, the accuracy can be dramatically improved. This is similar to classical adaptive methods such as the adaptive finite element method \citep{morin2002convergence, mekchay2005convergence}. In this work, we propose a new framework, called \emph{adversarial adaptive sampling} (AAS), that simultaneously optimizes the loss functional and the training set to seek neural network approximation for PDEs through a minmax formulation. 
More specifically, we minimize the residual and meanwhile push the residual-induced distribution to a uniform distribution.  
To do this, we introduce a deep generative model into the AAS formulation, which not only provides random samples for the discretization of the loss functional, but also plays the role of the critic in WGAN \citep{arjovsky2017wasserstein, gulrajani2017improved}. In the maximization step, the deep generative model helps identify the difference in a Wasserstein distance between the residual-induced distribution and a uniform distribution; in the minimization step, such a difference is minimized together with the residual. This way, variance reduction is achieved once the residual profile is smoothed and the loss functional can be better approximated by a fixed number of random samples, which yields a more effective optimal model parameter, i.e., a more accurate neural network approximation of the PDE solution.

The main contributions of this paper are as follows.
\begin{itemize}
	\item We unify PINN and optimal transport into an adversarial adaptive sampling framework, which provides a new perspective on neural network methods for solving PDEs.
	\item We develop a theoretical understanding of AAS and propose a simple but effective algorithm.
\end{itemize}

\section{PINN and its Statistical Errors}
The PDE problem considered here is: find $u \in \mathscr{F}: \Omega \mapsto \mbbR$ where $\mathscr{F}$ is a proper function space defined on a computational domain $\Omega \in \mbbR^D$, such that
\begin{equation} \label{eq_pde}
	\begin{aligned}
		\mathcal{L}u(\bx) &= s(\bx), \quad \forall \bx \in \Omega \\
		\mathfrak{b} u(\bx) &= g(\bx), \quad \forall \bx \in \partial \Omega, 
	\end{aligned}
\end{equation}
where $\mathcal{L}$ is a partial differential operator (e.g., the Laplace operator $\Delta$), $\mathfrak{b}$ is a boundary operator (e.g., the Dirichlet boundary), $s$ is the source function, and $g$ represents the boundary conditions.
In the framework of PINN \citep{raissi2019physics}, the solution $u$ of \eqref{eq_pde} is approximated by a neural network $u_{\mb{\theta}}(\bx)$ (parameterized with $\mb{\theta}$). The parameters $\mb{\theta}$ is determined by minimizing the following loss functional 
\begin{equation} \label{eq_resloss}
	\begin{aligned}
		J \left( u_{\mb{\theta}} \right) 
		&= J_r(u_{\mb{\theta}}) + \gamma J_b(u_{\mb{\theta}}) \quad \text{with} \\
		J_r(u_{\mb{\theta}}) &= \int_{\Omega} |r(\mb{x};\mb{\theta})|^2 d\mb{x} \ \text{ and } \ 
		J_b(u_{\mb{\theta}})= \int_{\partial \Omega} |b(\mb{x};\mb{\theta})|^2 d\mb{x},
	\end{aligned}
\end{equation} 
where $r(\mb{x};\mb{\theta}) = \mathcal{L} u_{\mb{\theta}}(\mb{x}) - s(\mb{x})$, and $b(\mb{x};\mb{\theta}) = \mathfrak{b} u_{\mb{\theta}}(\mb{x}) - g(\mb{x})$ are the residuals that measure how well $u_{\mb{\theta}}$ satisfies the partial differential equations and the boundary conditions, respectively, and $\gamma>0$ is a penalty parameter. To optimize this loss functional with respect to $\mb{\theta}$, we need to discretize the integral defined in \eqref{eq_resloss} numerically. 
Let $\mathsf{S}_{\Omega} = \{\mb{x}_{\Omega}^{(i)} \}_{i=1}^{N_r}$ and $\mathsf{S}_{\partial \Omega} = \{\mb{x}_{\partial \Omega}^{(i)} \}_{i=1}^{N_b}$ be two sets of uniformly distributed collocation points on $\Omega$ and $\partial\Omega$ respectively. 
We then minimize the following empirical loss in practice
\begin{equation}\label{eq_discrete_loss}
	J_N \left( u_{\mb{\theta}} \right) = \frac{1}{N_r} \sum\limits_{i=1}^{N_r} r^2(\mb{x}_{\Omega}^{(i)};\mb{\theta}) + \gamma \frac{1}{N_b} \sum\limits_{i=1}^{N_b} b^2(\mb{x}_{\partial \Omega}^{(i)};\mb{\theta}),
\end{equation}
which can be regarded as the Monte Carlo (MC) approximation of $J(u_{\mb{\theta}})$ subject to a statistical error of $\mathit{O}(N^{-1/2})$ with $N$ being the sample size. Let $u_{\mb{\theta}_N^*}$ be the minimizer of the empirical loss $J_N(u_{\mb{\theta}}) $ and $u_{\mb{\theta}^*}$ be the minimizer of the original loss functional $J(u_{\mb{\theta}})$. We can decompose the error into two parts as follows
\begin{equation*}
	\mathbb{E} \left( \norm{u_{\mb{\theta}_N^*} - u}{\Omega} \right) \leq \mathbb{E} \left( \norm{u_{\mb{\theta}_N^*} - u_{\mb{\theta}^*}}{\Omega} \right) + \norm{u_{\mb{\theta}^*} - u}{\Omega},
\end{equation*}
where $\mathbb{E}$ denotes the expectation operator and the norm $\norm{\cdot}{\Omega}$ corresponds to the function space $\mathscr{F}$ for $u$. One can see that the total error of neural network approximation for PDEs comes from two main aspects: the approximation error and the statistical error. The approximation error is dependent on the model capability of neural networks, while the statistical error originates from the collocation points.
Uniformly distributed collocation points are not effective for training neural networks if the solution has low regularity \citep{tang2021adaptive, tangdas, wu2023comprehensive} since the effective sample size of the MC approximation of $J(u_{\mb{\theta}})$ is significantly reduced by the large variance induced by the low regularity. \rev{For high-dimensional problems, information becomes more sparse or localized due to the curse of dimensionality, which shares some similarities with the low-dimensional problems of low regularity.} Adaptive sampling is needed. In this work, we propose a new framework to optimize both the approximation solution and the training set.

\section{Adversarial Adaptive Sampling}\label{sec_aas}
Adversarial adaptive sampling (AAS) includes two components to be optimized. One is a neural network $u_{\mb{\theta}}$ for approximating the PDE solution, and another is a probability density function (PDF) model $p_{\mb{\alpha}}$ (parameterized with $\mb{\alpha}$) for sampling. Unlike the deep adaptive sampling method (DAS) presented in \citep{tangdas}, in AAS, we simultaneously optimize the two models through an adversarial training procedure, which provides a new perspective to understand the role of random samples for the neural network approximation of  PDEs.  

\subsection{A minmax formulation}
The adversarial adaptive sampling approach can be formulated as the following minmax problem 
\begin{equation}\label{vanilla-minmax}
	\min_{\mb{\theta}} \max_{p_{\mb{\alpha}}\in V}\mathcal{J}(u_{\mb{\theta}}, p_{\mb{\alpha}})= \int_{\Omega} r^2(\bx;\mb{\theta})p_{\mb{\alpha}}(\bx)d\bx + \gamma J_b(u_{\mb{\theta}}),
\end{equation}
where $V$ is a function space that defines a proper constraint on $p_{\mb{\alpha}}(\bx)$. The choice of $V$ will be specified in sections \ref{sec:understanding} and \ref{sec:implementation} in terms of the theoretical understanding and numerical implementation of AAS. 

The main difference between $\mathcal{J}(u_{\mb{\theta}},p_{\mb{\alpha}})$ and $J(u_{\mb{\theta}})$ in \eqref{eq_resloss} is that the weight function for the integration of $r^2(\bx;\mb{\theta})$ is relaxed to $p_{\mb{\alpha}}(\bx)$ from a uniform one. First of all, such a relaxation can also be applied to $J_b(\cdot)$. In this work, we focus on the integration of $r^2$ for simplicity and assume that $J_b(\cdot)$ is well approximated by a prescribed set $\mathsf{S}_{\partial \Omega}$. Indeed, some penalty-free techniques \citep{berg2018unified, sheng2021pfnn} can be employed to remove $J_b(\cdot)$. Second, $p_{\mb{\alpha}}(\bx)>0$ is regarded as a PDF on $\Omega$, and an extra constraint on $p_{\mb{\alpha}}$ is necessary. Otherwise, the maximization step will simply yield a delta measure, i.e.,
\[
\delta(\bx-\bx_0)=\argmax_{p>0,\int_\Omega pd\bx=1} \int_{\Omega} r^2(\bx;\mb{\theta}) p(\bx) d\bx,
\]
where $\bx_0=\argmax_{\bx\in\Omega}r^2(\bx;\mb{\theta})$. Nevertheless, the region of large residuals is of particular importance for adaptive sampling. Third, the maximization in terms of $p_{\mb{\alpha}}$ is important numerically rather than theoretically. Indeed, if the statistical error does not exist and the model $u_{\mb{\theta}}$ includes the exact PDE solution, the minimum of $r^2$ is always reached at 0 as long as $p_{\mb{\alpha}}$ is positive on $\Omega$. To reduce the statistical error induced by the random samples from $p_{\mb{\alpha}}$, we expect a small variance $\text{Var}(r^2)$ in terms of $p_{\mb{\alpha}}$. \rev{If the variance of the Monte Carlo integration for $r^2$ is smaller than the variance of $r^2$ in terms of the uniform distribution, the accuracy of the Monte Carlo approximation will be improved for a fixed sample size, which yields a more accurate solution of PDEs \citep{tangdas}.} To obtain a small $\text{Var}(r^2)$, the profile of the residual needs to be nearly uniform. So an effective training strategy should not only minimize the residual but also endeavor to maintain a smooth profile of the residual, in other words, the two models $u_{\mb{\theta}}$ and $p_{\mb{\alpha}}$ need to work together. See Figure~\ref{fig:AAS_diagram} for an informal description of the approach.

\begin{wrapfigure}{r}{0.33\textwidth}
  \begin{center}
  \vspace{-17pt}
    \includegraphics[width=0.35\textwidth]{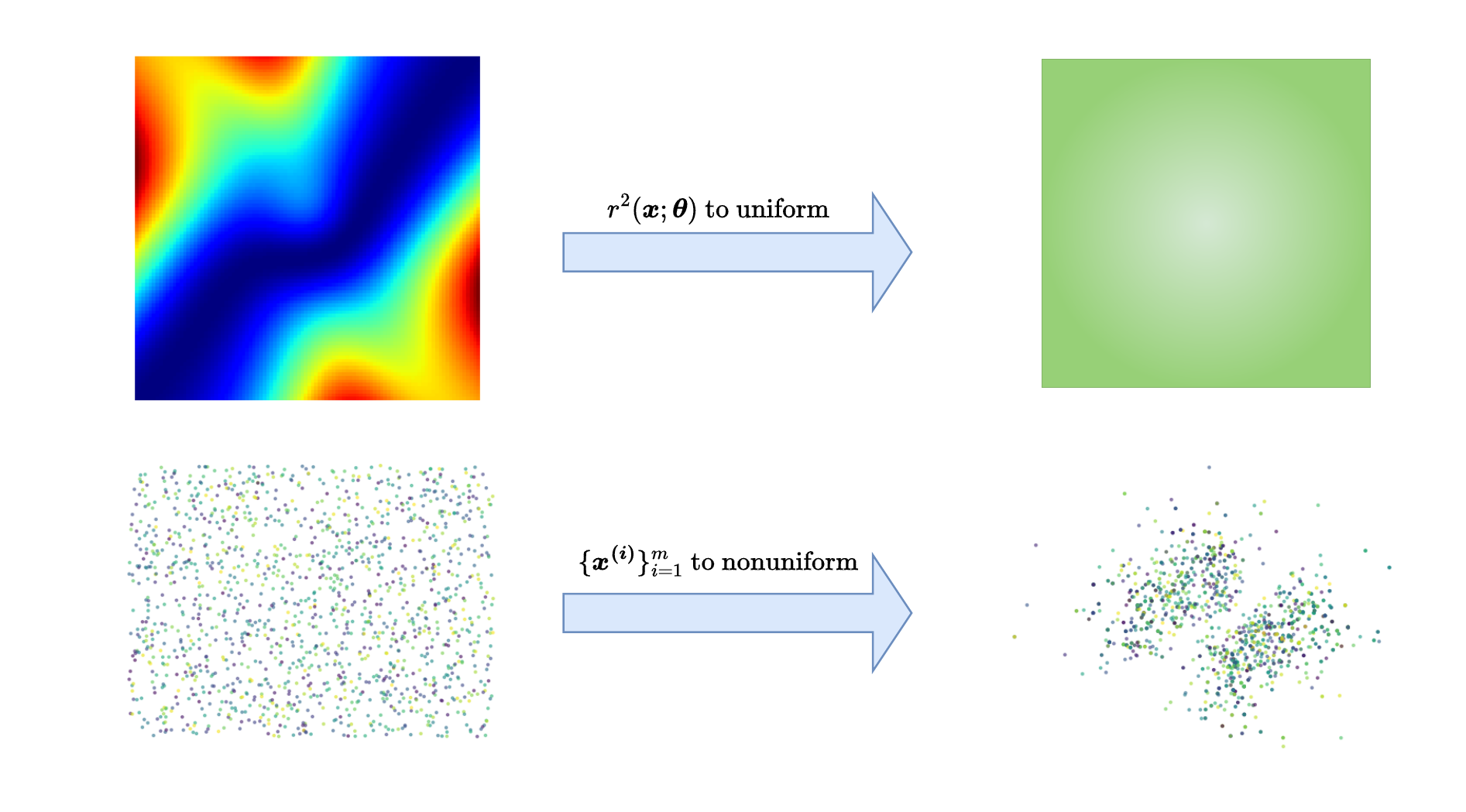}
  \end{center}
  \vspace{-13pt}
  \caption{\small{Two neural network models are simultaneously trained in the adversarial adaptive sampling framework. The residual is minimized and finally becomes ``uniform", while the collocation points are updated and finally become nonuniform}.}\label{fig:AAS_diagram}
  \vspace{-14pt}
\end{wrapfigure}

We will model $p_{\mb{\alpha}}$ using a bounded KRnet, which defines an invertible mapping $f_{\mb{\alpha}}(\cdot):I^D \rightarrow I^D$ with $I=[-1,1]$ and yields a normalizing flow model. In this work, we consider $\Omega = I^D$ for simplicity. \rev{The bounded KRnet can be achieved by adding a logistic transformation layer \citep{tangdas} or a new coupling layer proposed in \citep{zeng2023BKRnet}.} Let $\bz=f_{\mb{\alpha}}(\bx)$ and $p_{\bZ}(\bz)$ be a prior PDF. We define $p_{\mb{\alpha}}$ as
\begin{equation*}
	p_{\mb{\alpha}}(\bx)=p_{\bZ}(f_{\mb{\alpha}}(\bx))|\nabla_{\bx}f_{\mb{\alpha}}|.
\end{equation*}
Depending on a priori knowledge of the problem, the prior $p_{\bZ}(\bz)$ can be chosen as a uniform distribution or more general models such as Gaussian mixture model. 

\subsection{Understanding of AAS}\label{sec:understanding}
For simplicity and clarity, we remove $J_b(u_{\mb{\theta}})$ and consider
\begin{equation}\label{vanilla-minmax-r}
	\min_{\mb{\theta}} \max_{p\in V}\mathcal{J}(u_{\mb{\theta}}, p)= \int_{\Omega} r^2(\bx;\mb{\theta})p(\bx)d\bx.
\end{equation}
We choose $V$ as
\[
V:=\{p(\bx)|\|p\|_{\text{Lip}}\leq 1,\,0\leq p(\bx)\leq M\},
\]
where $M$ is a positive number. We define a bounded metric
\begin{equation*}
	d_M(\bx,\by) = \min\{M,d(\bx,\by)\},\quad \bx,\by\in \mathbb{R}^D,
\end{equation*}
where $d(\bx,\by)=\|\bx-\by\|_2$ is the Euclidean metric in $\mathbb{R}^D$. Without loss of generality, let $\Omega$ be a compact subset of $\mathbb{R}^D$ with total Lebesgue measure $1$, and $\mu$ and $\nu$ two probability measures on $\Omega$. The \textit{Wasserstein distance} $d_{W^M}(\mu,\nu)$ between $\mu$ and $\nu$ for the metric $d_M(\bx,\by)$ is
\begin{equation*}
	d_{W^{M}}(\mu,\nu) = \inf_{\pi\in\Pi(\Omega\times \Omega)} \int_{\Omega\times \Omega} d_M(\bx,\by)\,d\pi(\bx,\by),
\end{equation*}
where $\Pi(\Omega\times \Omega)$ is the collection of all joint probability measures on $\Omega\times \Omega$.
The dual form (see e.g. \citep{Villani_book}, Theorem 1.14 and Remark 1.15 on Page 34) of $d_{W^{M}}$ is 
\begin{equation}\label{Wass_dual}
	d_{W^M}(\mu,\nu) = \sup\Big\{\int_{\Omega} \phi(\bx)\,d(\mu-\nu)(\bx)\Big|~ 0\leq \phi(\bx)\leq\|d_M\|_{\infty} = M, \text{ and }\|\phi\|_{\text{Lip}}\leq 1\Big\},
\end{equation}
where $\|\phi\|_{\text{Lip}}$ 
is the Lipschitz norm of function $\phi$. We now reformulate the maximization problem as
\begin{align*}
	&\sup_{p\in V}\int_{\Omega}r^2(\bx;\theta)p(\bx)d\bx \notag\\
	=&\sup_{p\in V}\int_{\Omega}r^2(\bx;\theta)p(\bx)d\bx-\int_{\Omega}r^2(\bx;\theta)d\bx\int_{\Omega}p(\bx)d\bx+\int_{\Omega}r^2(\bx;\theta)d\bx\int_{\Omega}p(\bx)d\bx\notag\\
	\leq &\int_\Omega r^2(\bx;\theta)d\bx\left(\sup_{p\in V}\left[\int_\Omega p(\bx)d\mu_r-\int_\Omega p(\bx)d\mu_u\right]+\sup_{p\in V}\int_\Omega p(\bx)d\bx\right)\notag\\
	\leq & (d_{W^M}(\mu_r, \mu_u)+M)\int_\Omega r^2(\bx;\mb{\theta})\,d\bx, 
\end{align*}
where $\mu_r$ and $\mu_u$ indicate the probability measures on $\Omega$ induced by $r^2(\bx)$ and the uniform distribution on $\Omega$ respectively. It can be shown that the constant $M$ exists if we modify the function space $V$ as
\begin{equation*}
	\hat{V}=\{p(\bx)|\|p\|_{\text{Lip}}\leq 1,\,p(\bx)\geq0,\int_\Omega p(\bx)d\bx=1\},
\end{equation*}
where $p(\bx)$ can then be regarded as a PDF. \rev{It is seen that the upper bound includes both the loss of the standard PINN and the Wasserstein distance between the residual induced distribution $\mu_r$ and the uniform distribution $\mu_u$. For any $u$, the existence of function $\Tilde{u}$, which has the same total residual loss as $u$ and a uniform residual profile, is theoretically guaranteed (for detailed construction and these properties of $\tilde{u}$, please see the proof of Theorem \ref{Main_theorem} in Appendix \ref{Section:Main_theorem}). This ensures that we can simultaneously reduce the residual and the Wasserstein distance between the (renormalized) residual and the uniform distribution.} Once the residual profile is smoothed, variance reduction is achieved such that the Monte Carlo approximation of $\mathcal{J}(u_{\mb{\theta}},p)$ will be more accurate for a fixed sample size. This eventually reduces the statistical error of the approximate PDE solution.

We now summarize our main analytical results. Consider 
\begin{equation}\label{min-max}
	\inf_{u} \sup_{p\in \hat{V}} \mathcal{J}(u, p)=\int_\Omega r^2(u(\bx))p(\bx)\,d\bx,
\end{equation}
with the following assumption.
\begin{assumption}\label{assumption2}
	The operator $r$ in \eqref{min-max} is a surjection from a function space $E_1(\mathbb{R}^D)$ to $C^{\infty}_c(\Omega)$, the class of $C^{\infty}$ functions that are compactly supported on $\Omega$. 
\end{assumption}
In general, $E_1(\mathbb{R}^D)$ can be any function space, such as space of neural networks, smooth functions, or Sobolev spaces. And this assumption means for any smooth function $f\in C^{\infty}_c(\Omega)$, equation $r^2(u^*) = f$ admits some solution $u^*$. For example, if $r$ is Laplacian $\Delta$, the assumption means we can find a solution for $\Delta u = f$ for any $f$ in $C^{\infty}_c(\Omega)$. With this assumption, we can prove the following main theorem for the min-max problem \eqref{min-max} \rev{(for the detailed proof, please see the Section \ref{Section:Main_theorem} in the supplementary material)}, 
\begin{thm}\label{Main_theorem2}
	Let $\mu$ be the Lebesgue measure on $\mathbb{R}^D$, which represents the uniform probability distribution on $\Omega$. In addition, we assume Assumption \ref{assumption2} holds. Then the optimal value of the min-max problem \eqref{min-max} is $0$. Moreover, there is a sequence $\{u_n\}_{n = 1}^{\infty}$ of functions with $r(u_n)\neq 0$ for all $n$, such that it is an optimization sequence of \eqref{min-max}, namely,
	\begin{equation*}
		\lim_{n\rightarrow\infty} \mathcal{J}(u_n, p_n) = 0,
	\end{equation*}
	for some sequence of functions $\{p_n\}_{n = 1}^{\infty}$ satisfying the constraints in \eqref{min-max}. Meanwhile, this optimization sequence has the following two properties:
	\begin{enumerate}
		\item The residual sequence $\{r(u_n)\}_{n = 1}^{\infty}$ of $\{u_n\}_{n = 1}^{\infty}$ converges to $0$ in $L^2(d\mu)$.
		\item The renormalized squared residual distributions
		\begin{equation*}
			d\nu_n \triangleq \frac{r^2(u_n)}{\int_{\Omega} r^2(u_n(\bx))\,d\bx}\,d\mu(\bx)
		\end{equation*}
		converge to the uniform distribution $\mu$ in the Wasserstein distance $d_{W^M}$.
	\end{enumerate}
\end{thm}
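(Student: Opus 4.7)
The plan is to use Assumption~\ref{assumption2} to construct an explicit optimization sequence by prescribing smooth, compactly supported, nonzero residuals whose $L^2$ norms decay to zero and whose squares, after renormalization, converge to the uniform density on $\Omega$. Since $r^2\ge 0$ and $p\ge 0$ for any $p\in\hat V$, the infimum of the min-max is bounded below by $0$, so producing such a sequence automatically certifies that the optimum equals $0$.

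First I would fix an exhaustion of $\Omega$: choose compact sets $K_n$ with $K_n\subset\mathrm{int}(K_n')\subset K_n'\subset \Omega$, $\mu(\Omega\setminus K_n)<1/n$ and $\mu(K_n'\setminus K_n)<1/n$, together with a standard smooth cutoff $\chi_n\in C^\infty_c(\Omega)$ satisfying $\chi_n\equiv 1$ on $K_n$, $\operatorname{supp}\chi_n\subset K_n'$, and $0\le\chi_n\le 1$. For a sequence $\epsilon_n\downarrow 0$ set $\phi_n:=\epsilon_n\chi_n\in C^\infty_c(\Omega)\setminus\{0\}$. Assumption~\ref{assumption2} then furnishes $u_n\in E_1(\mathbb R^D)$ with $r(u_n)=\phi_n\neq 0$. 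Take $p_n\equiv 1$, the uniform density on $\Omega$, which lies in $\hat V$ since it is $1$-Lipschitz (in fact constant) and has unit integral.

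Next I would verify the three conclusions. Property~1 is immediate from
\[
\|r(u_n)\|_{L^2(d\mu)}^2 \;=\; \epsilon_n^2\int_\Omega \chi_n^2\,d\mu\;\le\;\epsilon_n^2\;\to\;0.
\]
For the min-max value, observe that any $p\in\hat V$ is $1$-Lipschitz on the compact set $\Omega$ with $\int_\Omega p\,d\mu=1$ and $\mu(\Omega)=1$, so continuity forces $\min_\Omega p\le 1$ and hence $\|p\|_\infty\le 1+\operatorname{diam}(\Omega)=:C$. Therefore
\[
\mathcal{J}(u_n,p_n)\;\le\;\sup_{p\in\hat V}\mathcal{J}(u_n,p)\;\le\; C\int_\Omega r^2(u_n)\,d\mu\;\to\;0,
\]
which confirms that $\{u_n\}$ is an optimization sequence and the infimum equals $0$. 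For Property~2, the renormalized density is $d\nu_n/d\mu=\chi_n^2/\int_\Omega\chi_n^2\,d\mu$; since $\chi_n^2\to 1$ pointwise $\mu$-a.e.\ and is dominated by $1$, dominated convergence gives $\int_\Omega\chi_n^2\,d\mu\to 1$, whence $d\nu_n/d\mu\to 1$ in $L^1(d\mu)$, so $\nu_n\to\mu$ in total variation and a fortiori weakly.

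The final and most delicate step is upgrading weak convergence to convergence in $d_{W^M}$. On the compact metric space $(\Omega,d_M)$ the Wasserstein distance $d_{W^M}$ metrizes weak convergence of probability measures \citep{Villani_book}, precisely because its Kantorovich--Rubinstein dual \eqref{Wass_dual} ranges over the bounded, equicontinuous class of $1$-Lipschitz functions valued in $[0,M]$. Combining this fact with the weak convergence $\nu_n\to\mu$ established above yields $d_{W^M}(\nu_n,\mu)\to 0$. I expect this final implication---turning weak (or total-variation) convergence into Wasserstein convergence under the bounded metric $d_M$---to be the main technical point to write out carefully, but it requires no new ingredient beyond standard duality on a compact metric space.
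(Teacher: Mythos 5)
Your proposal is correct and follows essentially the same route as the paper: you invoke the surjectivity assumption to prescribe residuals proportional to smooth cutoffs approximating $\mathbbm{1}_{\Omega}$, control $\sup_{p\in\hat{V}}\mathcal{J}(u_n,p)$ by the fact that a $1$-Lipschitz probability density on the compact domain is uniformly bounded (the paper's Lemma~\ref{lem:lipschitz_bound}), and obtain convergence of $\nu_n$ to $\mu$ in $d_{W^M}$ from the dual form \eqref{Wass_dual}. The only real difference is cosmetic: you prescribe $r(u_n)=\epsilon_n\chi_n$ directly rather than starting from an abstract minimizing sequence and replacing its squared residual by $w_n\int_\Omega r^2(u_n(\bx))\,d\bx$ as the paper does, which is a slightly cleaner packaging of the same construction.
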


\subsection{Implementation of AAS}\label{sec:implementation}
In the previous section, we have shown that $p_{\mb{\alpha}}(\bx)$ and $u_{\mb{\theta}}(\bx)$ in \eqref{vanilla-minmax} play a similar role as the critic and generator in WGAN \citep{arjovsky2017wasserstein,gulrajani2017improved}. The generator of WGAN minimizes the Wasserstein distance between two distributions; PINN minimizes the residual; AAS achieves a tradeoff between the minimization of the residual and the minimization of the Wasserstein distance between the residual-induced distribution and the uniform distribution. From the implementation point of view, a particular difficulty is the constraint $\|p\|_{\text{Lip}}\leq 1$ induced by the function space $\hat{V}$. In this work, we propose a weaker constraint that can be easily implemented. We consider
\begin{equation}\label{vanilla-minmax-s}
	\min_{\mb{\theta}} 
	\max_{ \substack{p_{\mb{\alpha}} > 0, \\ \int_{\Omega} p_{\mb{\alpha}}(\bx) d\bx = 1  } }
	\mathcal{J}(u_{\mb{\theta}}, p_{\mb{\alpha}}) = \int_{\Omega} r^2(\bx;\mb{\theta})p_{\mb{\alpha}}(\bx)d\bx - \beta \int_{\Omega} |\nabla_{\mb{x}} p_{\mb{\alpha}}(\mb{x})|^2d\bx,
\end{equation}
where we use a $H_1$ regularization term to replace explicit control on the Lipschitz condition. The constraints on a PDF are naturally satisfied because $p_{\mb{\alpha}}$ is a normalizing flow model. $p_{\mb{\alpha}}(\bx)>0$ as long as the prior is positive since $f_{\mb{\alpha}}(\cdot)$ is an invertible mapping. It can be shown that the maximizer for a fixed $u_{\mb{\theta}}$ is uniquely determined by the following elliptic equation
\begin{equation}\label{eqn:rho}
	\left\{
	\begin{array}{rl}
		2\beta\nabla^2p^*+r^2(\bx;\mb{\theta})-\frac{1}{|\Omega|}\int_{\Omega}r^2(\bx;\mb{\theta})d\bx=0,&\bx\in \Omega,\\
		\frac{\partial p^*}{\partial\bn}=0,&\bx\in\partial \Omega.
	\end{array}
	\right.
\end{equation}
\rev{In the deep learning framework, the neural networks are in general (particularly when solving PDEs) differentiable. So the regularity constraint $\|p^*\|_{\text{Lip}}\leq M$ is equivalent to $\|\nabla p^*\|_{\infty}\leq M$ on a compact set $\Omega$. Thus, we can adjust the penalty parameter $\beta$ to implicitly control this regularity.} Such a choice is demonstrated to be empirically sufficient since we focus on PDE approximation instead of PDF approximation.

To update $\mb{\theta}$ at the minimization step, we approximate the first term of $\mathcal{J}(u_{\mb{\theta}}, p_{\mb{\alpha}})$ in \eqref{vanilla-minmax-s} using Monte Carlo methods:
\begin{equation}\label{eq_aas_ft_appr}
	\int_{\Omega} r^2\left[u_{\mb{\theta}}(\bx) \right] p_{\mb{\alpha}}(\bx)d\bx \approx \frac{1}{m} \sum\limits_{i=1}^{m} r^2\left[u_{\mb{\theta}}(\bx_{\mb{\alpha}}^{(i)}) \right],
\end{equation}
where $\mb{x}_{\mb{\alpha}}^{(i)}$ can be generated from the probability density $p_{\mb{\alpha}}$ efficiently thanks to the invertible mapping $f_{\mb{\alpha}}(\cdot)$. To update $\mb{\alpha}$ at the maximization step, we approximate $\mathcal{J}(u_{\mb{\theta}}, p_{\mb{\alpha}})$ by importance sampling:
\begin{equation}\label{eq_aas_st_appr}
	\mathcal{J}(u_{\mb{\theta}}, p_{\mb{\alpha}}) \approx \frac{1}{m} \sum\limits_{i=1}^{m} \frac{ r^2\left[u_{\mb{\theta}}(\bx_{\mb{\alpha}^{\prime}}^{(i)}) \right] p_{\mb{\alpha}}(\bx_{\mb{\alpha}^{\prime}}^{(i)}) }{ p_{\mb{\alpha}^{\prime}}(\bx_{\mb{\alpha}^{\prime}}^{(i)}) } - \beta \cdot \frac{1}{m} \sum\limits_{i=1}^m \frac{|\nabla_{\mb{x}} p_{\mb{\alpha}}(\bx_{\mb{\alpha}^{\prime}}^{(i)})|^2}{p_{\mb{\alpha}^{\prime}} (\bx_{\mb{\alpha}^{\prime}}^{(i)})},
\end{equation}
where $p_{\mb{\alpha}^{\prime}}$ is a PDF model with known parameters $\mb{\alpha}^{\prime}$ and each $x_{\mb{\alpha}^{\prime}}^{(i)}$ is a sample drawn from $p_{\mb{\alpha}^{\prime}}$. Using \eqref{eq_aas_ft_appr} and \eqref{eq_aas_st_appr}, we can compute the gradient with respect to $\mb{\theta}$ and $\mb{\alpha}$, and the parameters can be updated by gradient-based optimization methods (e.g., Adam \citep{kingma2017adam}). 
The training procedure is similar to GAN \citep{goodfellow2014generative} and can be summarized in Algorithm \ref{alg_aas}, where we let $p_{\mb{\alpha}^{\prime}} = p_{\mb{\alpha}_k}$ in \eqref{eq_aas_st_appr}, i.e., the PDF model from the last step is used for importance sampling when computing $\mathcal{J}(u_{\mb{\theta}}, p_{\mb{\alpha}})$.

\begin{algorithm}
	\caption{AAS for PDEs}
	\label{alg_aas}
	\begin{algorithmic}[1]
		\REQUIRE Initial  $p_{\mb{\alpha}}$ and $u_{\mb{\theta}}$, maximal iteration $M$, batch size $m$, initial training set $\mathsf{S}_{\Omega, 0} = \{\mb{x}_{\mb{\alpha}_0}^{(i)} \}_{i=1}^{N_r}$ and $\mathsf{S}_{\partial \Omega, 0} = \{\mb{x}_{\partial \Omega, 0}^{(i)} \}_{i=1}^{N_b}$.
		\FOR{$k = 0, \ldots, M$}
		\FOR {$j$ steps}
		\STATE Sample $m$ samples from $\mathsf{S}_{\Omega, k}$ and sample $m$ samples from $\mathsf{S}_{\partial \Omega, k}$.
		\STATE Update $u_{\mb{\theta}}$ by descending the stochastic gradient of $\mathcal{J}(\mb{\theta}, \mb{\alpha})$ (see \eqref{eq_aas_ft_appr}).
		\ENDFOR
		\FOR {$j$ steps}
		\STATE Sample $m$ samples from $\mathsf{S}_{\Omega, k}$.
		\STATE Update $p_{\mb{\alpha}}$ by ascending the stochastic gradient of $\mathcal{J}(\mb{\theta}, \mb{\alpha})$ (see \eqref{eq_aas_st_appr}).
		\ENDFOR
		\STATE Generate  $\mathsf{S}_{\Omega, k+1} \subset \Omega$ through $p_{\mb{\alpha}_k}$.
		\ENDFOR	
		\ENSURE $u_{\mb{\theta}}$
	\end{algorithmic}
\end{algorithm}

\section{Related Work}
There is a lot of related work, and we summarize the most related lines of this work.

\textbf{Adaptive sampling}.
Adaptive sampling methods have been receiving increasing attention in solving PDEs with deep learning methods. The basic idea of such methods is to define a proper error indicator \citep{wu2023comprehensive, yu2022gradient} for refining collocation points in the training set, in which sampling approaches \citep{gao2021active} (e.g., Markov Chain Monte Carlo) or deep generative models \citep{tangdas} are often invoked. To this end, an additional deep generative model (e.g., normalizing flow models), or classical PDF model (e.g., Gaussian mixture models \citep{gao2022failure, jiao2023gas}) for sampling is usually required, which is similar to this work. However, there are some crucial differences between existing approaches and the proposed adversarial adaptive sampling (AAS) framework. First, in AAS, the evolution of the residual-induced distribution has a clear path. That is, this residual-induced distribution is pushed to a uniform distribution during training. Because minimizing the Wasserstein distance between the residual-induced distribution and the uniform distribution is naturally embedded in the loss functional in the proposed adversarial sampling framework. Second, unlike the existing methods, our AAS method admits an adversarial training style like in WGAN, which is the first time to  minimize the residual and seek the optimal training set simultaneously for PINN. 

\textbf{Adversarial training}. In \citep{zang2020weak}, the authors proposed a weak formulation with primal and adversarial networks, where the PDE problem is converted to an operator norm minimization problem derived from the weak formulation. Although the adversarial training procedure is employed in \citep{zang2020weak}, it does not involve the training set but the function space. Introducing one or more discriminator networks to construct adversarial training is studied in \citep{zeng2022competitive}, where the discriminator is used for the reward that PINN predicts mistakes. However, this approach does not optimize the training set but implicitly assigns higher weights
for samples with large point-wise residuals through adversarial training.

\section{Numerical Results}
\label{sec_results}
We use some benchmark test problems presented in \citep{tangdas} to demonstrate the proposed method. All models are set to be the same as those in DAS-PINNs \citep{tangdas} and trained by the Adam method \citep{kingma2017adam}. The hyperparameters of neural networks are set to be the same as those in DAS-PINNs \citep{tangdas}. For comparison, we also implement the DAS algorithm \citep{tangdas} and the RAR algorithm \citep{lu2021deepxde,yu2022gradient} as the baseline models. The training of neural networks is performed on a Geforce RTX 3090 GPU with TensorFlow 2.0. The codes of all examples will be released on GitHub once the paper is accepted.

\subsection{One-peak problem}
We start with the following equation which is a benchmark test problem for adaptive finite element methods \citep{mitchell2013collection,morin2002convergence}:
\begin{equation}\label{eq_pde_peak}
	\begin{aligned}
		- \Delta u(\bx) &= s(\bx) \quad \text{in} \ \Omega, \\
		u(\bx) &= g(\bx) \quad \text{on} \  \partial \Omega,
	\end{aligned}	
\end{equation}
where $\bx = [x_1, x_2]^{\mathsf{T}}$ and the computation domain is $\Omega = [-1,1]^2$. The following reference solution is given by
\begin{equation*}
	u(x_1, x_2) = \mathrm{exp} \left(  -1000 [(x_1 - 0.5)^2 + (x_2 - 0.5)^2 ]\right),
\end{equation*}
which has a peak at $(0.5, 0.5)$ and decreases rapidly away from $(0.5, 0.5)$. The reference solution is imposed on the boundary. \rev{The source term $s(\mb{x})$ is derived by the exact solution and is listed in the supplementary \ref{sec_supp_exprm}.} A uniform meshgrid with size $256 \times 256$ in $[-1,1]^2$ is generated and the error is defined to be the mean square error on these grid points.
From Figure~\ref{fig:peak_sol}(a), it can be seen that our AAS method can give an accurate approximation for this peak test problem. Note that the uniform sampling strategy is not suitable for this test problem as studied in \citep{tangdas}. The training behaviour for different regularization parameters (i.e., $\beta$) is shown in Figure~\ref{fig:peak_sol}(a) and Figure~\ref{fig:peak_sol}(b). It can be seen that the error behavior is similar for $\beta = 5, 10, 20$. Figure~Figure~\ref{fig:peak_sol}(c) shows the evolution of the residual variance and the training set during training for $\beta = 5$, where the variance decreases as the training step increases and the training set finally concentrates on $(0.5,0.5)$ with a heavy tail. The comparison of different adaptive sampling methods is presented in Table \ref{table_10dnlexp}, which also included the results of the following test problems.

\begin{figure}[!ht]
	\centering
	\subfloat[][]{\includegraphics[width=.325\textwidth]{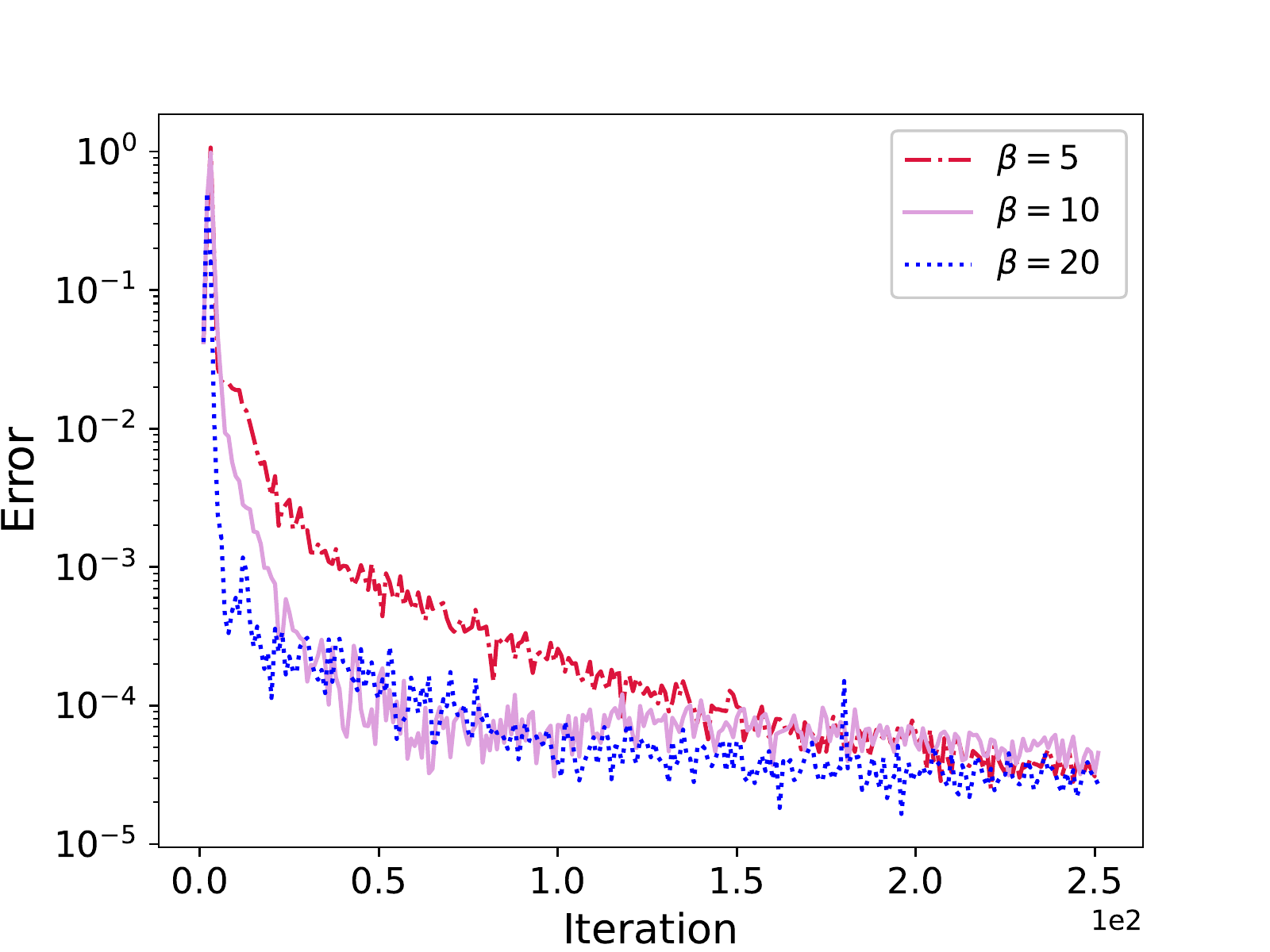}}\
	\subfloat[][]{\includegraphics[width=.325\textwidth]{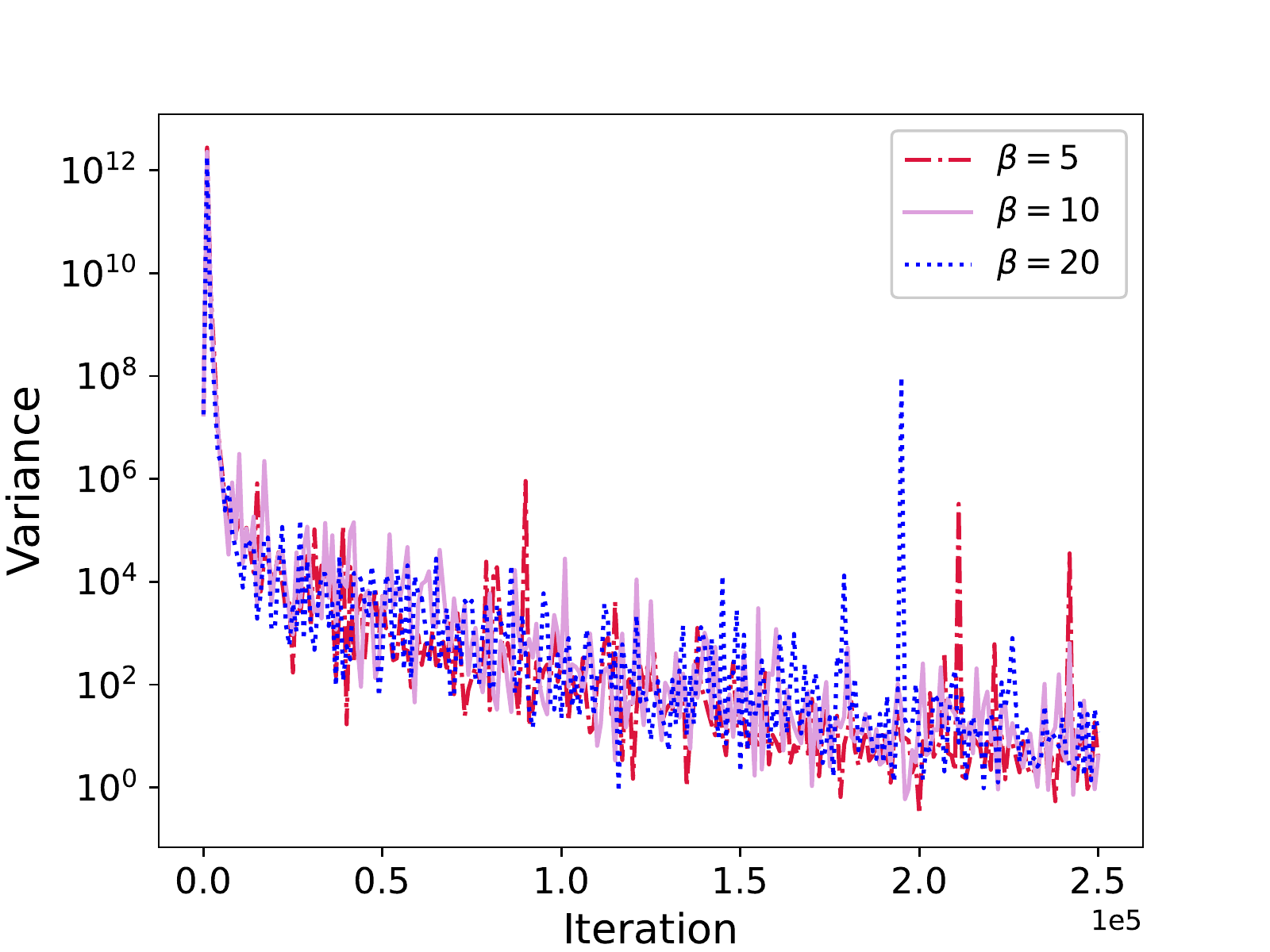}}\
	\subfloat[][]{\includegraphics[width=.338\textwidth]{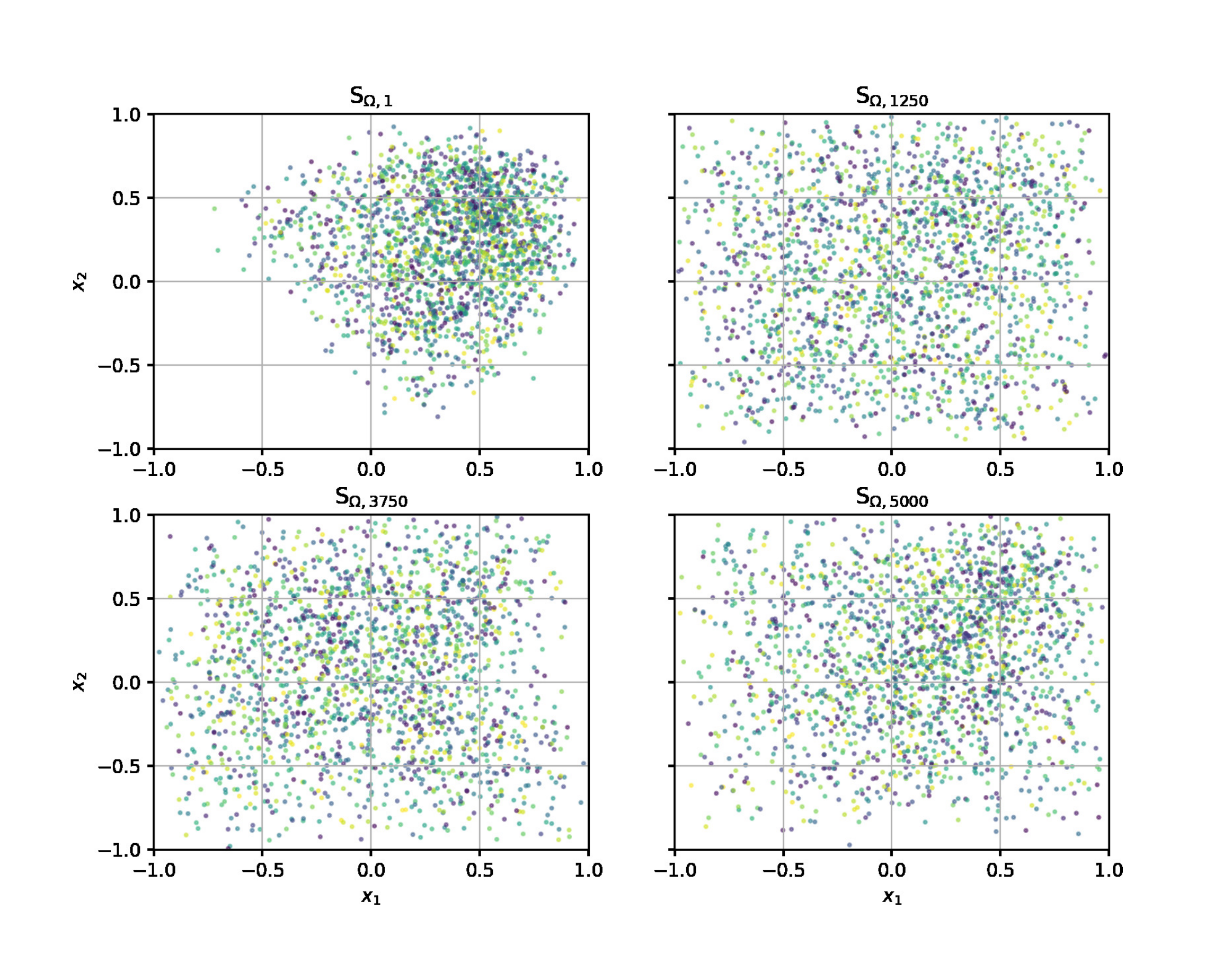}}\
	\caption{The results for the peak test problem. (a) The error behaviour. (b) The variance behavior. (c) The evolution of the training set.}
	\label{fig:peak_sol}
\end{figure}

\subsection{Two-peak problem}
We next consider the following equation
\begin{equation}\label{eq_two_peaks_pde}
	\begin{aligned}
		-\nabla \cdot \left [u(\bx) \nabla v(\bx) \right ] &+ \nabla^2 u(\bx) = s(\bx) \quad \text{in} \ \Omega, \\
		u(\bx) &= g(\bx) \quad \text{on} \  \partial \Omega,
	\end{aligned}	
\end{equation}
where $\bx = [x_1, x_2]^{\mathsf{T}}$, $v(\bx) = x_1^2 + x_2^2$, and the computation domain is $\Omega = [-1,1]^2$. Following \citep{tangdas}, the exact solution of \eqref{eq_two_peaks_pde} is set to be as 
\begin{equation*}
	u(x_1, x_2) = \mathrm{e} ^{  -1000 [(x_1 -0.5)^2 + (x_2 - 0.5)^2 ]} + \mathrm{e}^ {  -1000 [(x_1 + 0.5)^2 + (x_2 + 0.5)^2 ]}, 
\end{equation*}
which has two peaks at the points $(0.5, 0.5)$ and $(-0.5, -0.5)$. Here, the Dirichlet boundary condition on $\partial \Omega$ is given by the exact solution.
From Figure~\ref{fig:2peaks_sol}(a) and Figure~\ref{fig:2peaks_sol}(b), it can be seen that our AAS method can give an accurate approximation for this two-peak test problem. The error behavior for different regularization parameters (i.e., $\beta$) is shown in Figure~\ref{fig:2peaks_sol}(c). Figure~\ref{fig:loss_sample_2peaks} shows the evolution of the residual variance and the training set during training for $\beta = 5$, where the residual variance decreases as the training step increases and the training set finally concentrates on $(-0.5, -0.5)$and $(0.5,0.5)$ with a heavy tail.

\begin{figure}[!ht]
	\centering
	\subfloat[][]{\includegraphics[width=.325\textwidth]{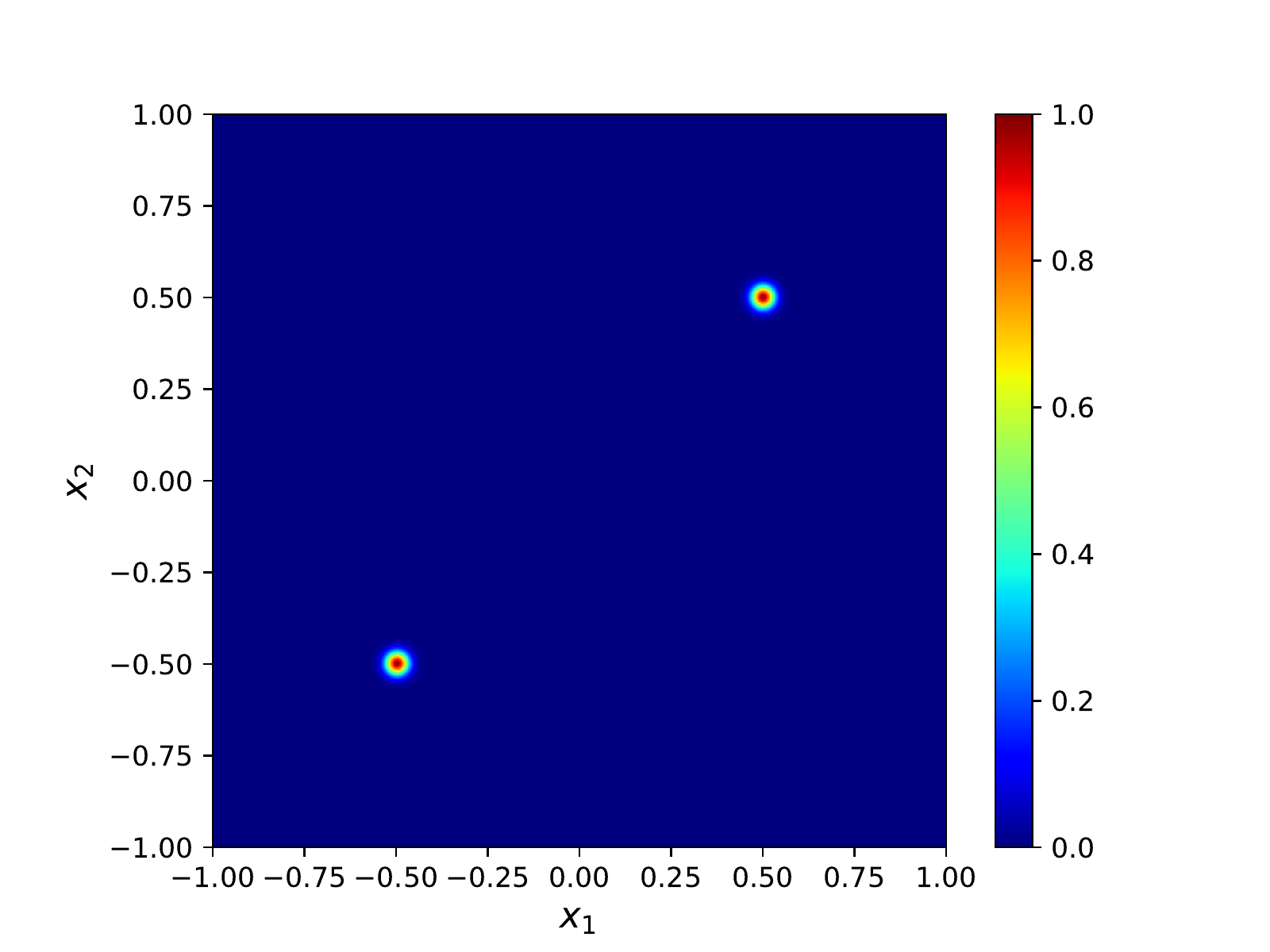}}\
	\subfloat[][]{\includegraphics[width=.325\textwidth]{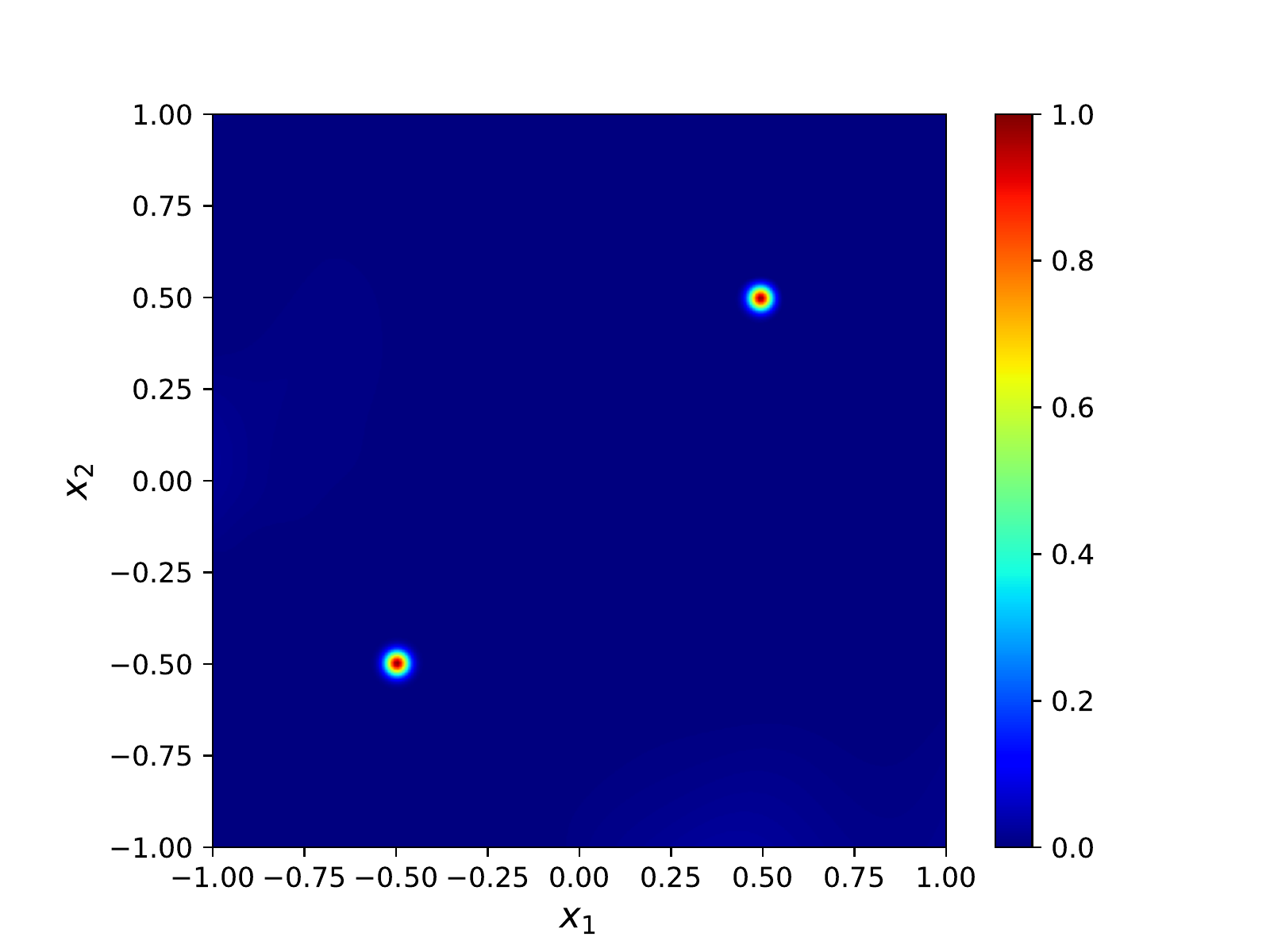}}\
	\subfloat[][]{\includegraphics[width=.335\textwidth]{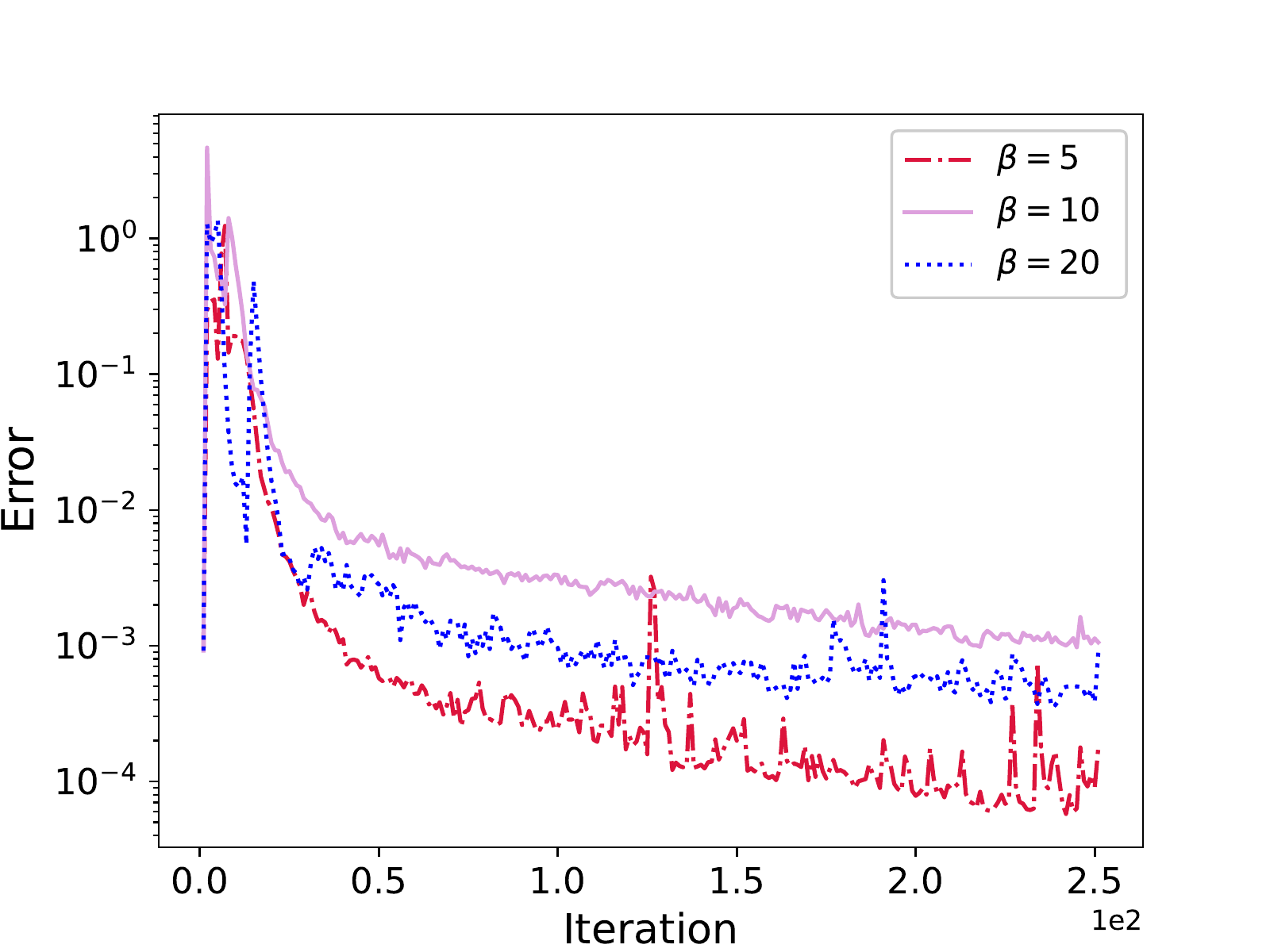}}
	\caption{The results for the two-peak test problem. (a) The exact solution. (b) AAS approximation. (c) The error behavior.}
	\label{fig:2peaks_sol}
\end{figure}

\begin{figure}[ht]
	\center{
		\includegraphics[width=0.42\textwidth]{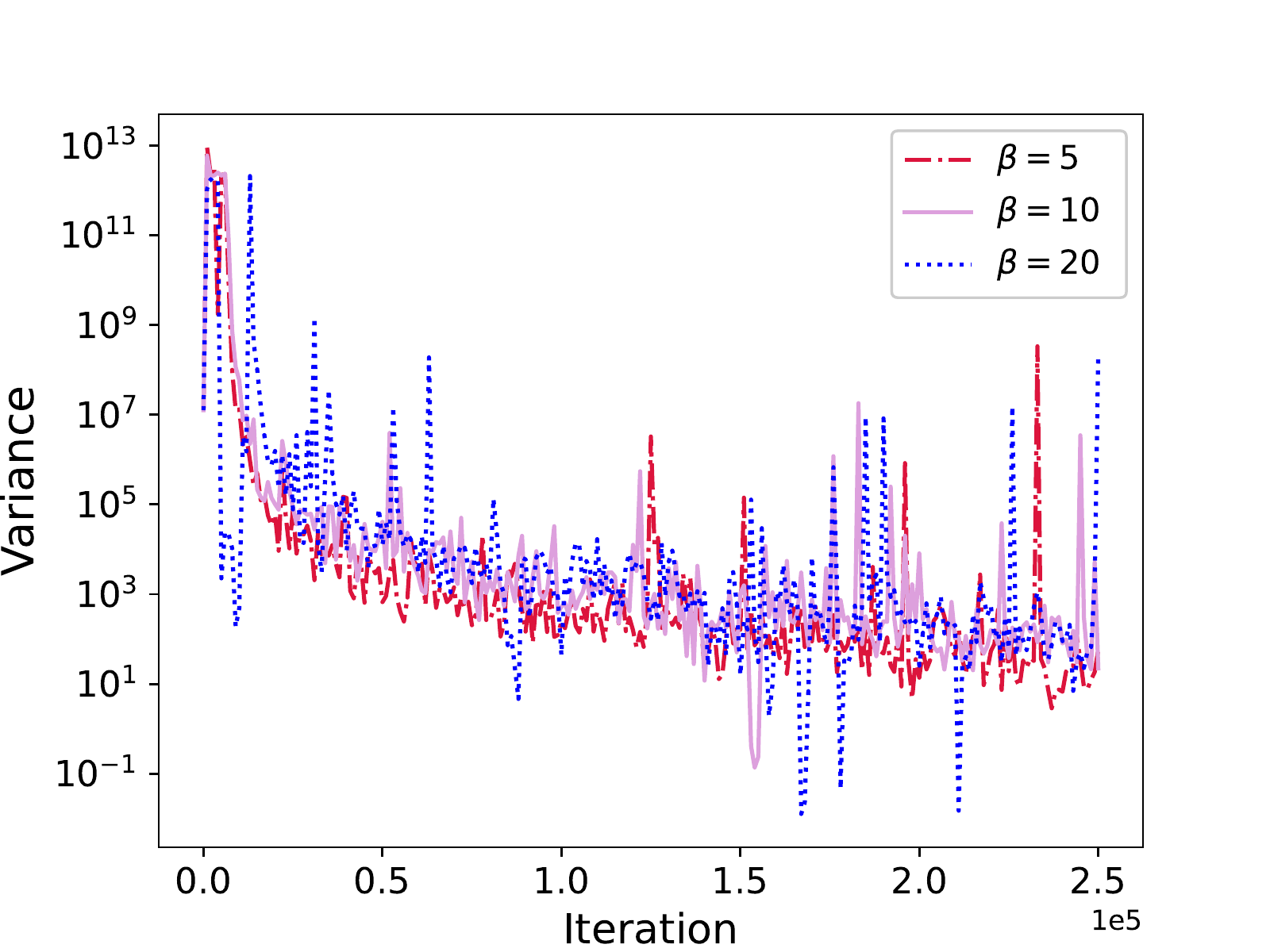}
		\includegraphics[width=0.42\textwidth]{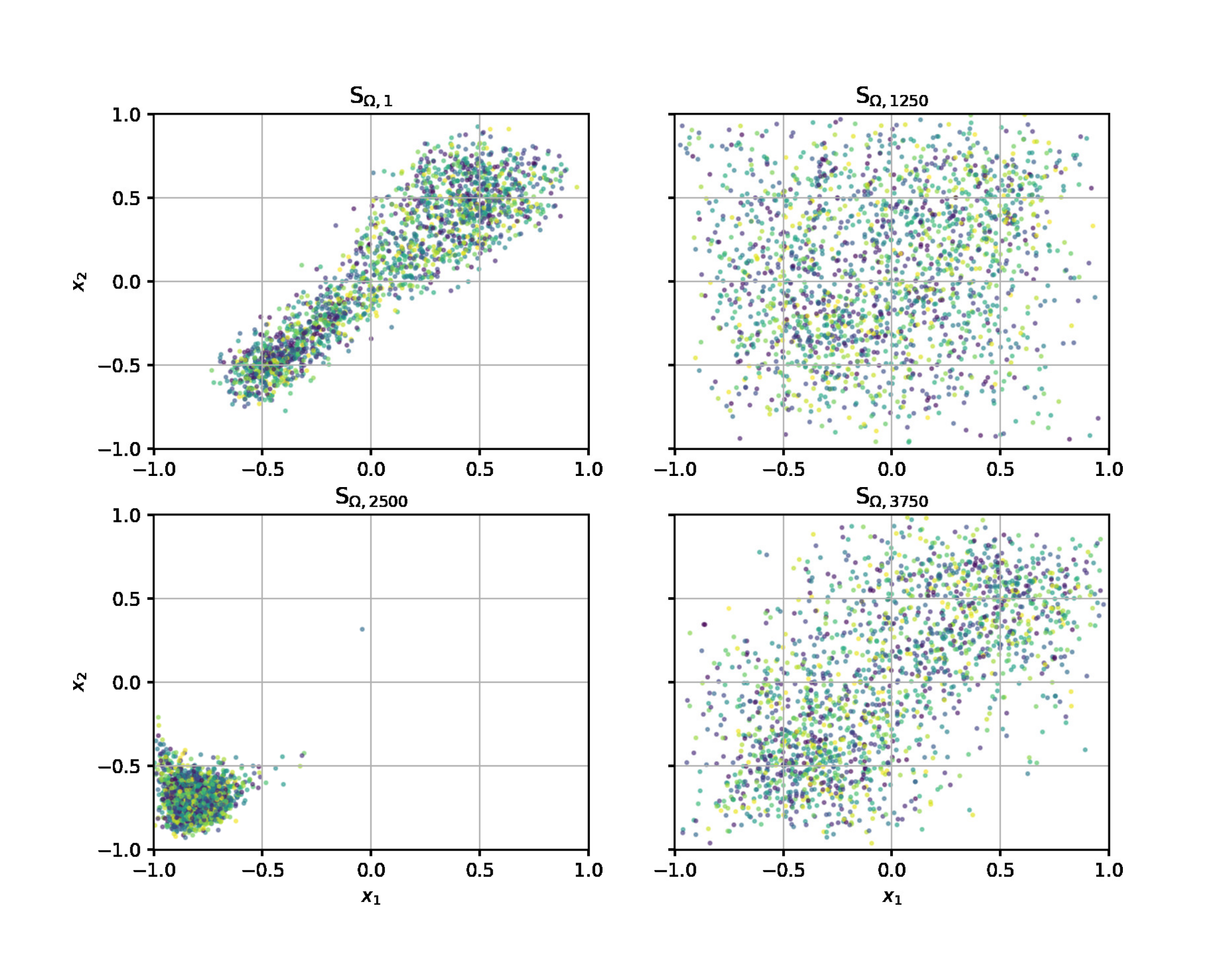}
	}
	\caption{The evolution of the residual variance and the training set for the two-peak test problem. Left: The variance behavior. Right: The evolution of the training set.}
	\label{fig:loss_sample_2peaks}
\end{figure}

\subsection{High-dimensional nonlinear equation}
In this part, we consider the following ten-dimensional nonlinear partial differential equation
\begin{equation}\label{eq_pde_10dnlexp}
	\begin{aligned}
		- \Delta u(\mb{x}) + u(\mb{x}) - u^3(\mb{x}) &= s(\mb{x}), \quad  \mb{x}  \ \text{in} \ \Omega = [-1,1]^{10} \\
		u(\bx) &= g(\bx), \quad \bx \ \text{on} \  \partial \Omega.
	\end{aligned}
\end{equation}
The exact solution is $u(\mb{x}) = \mathrm{e}^{-10 \norm{\mb{x}}{2}^2}$ and the Dirichlet boundary condition on $\partial \Omega$ is imposed by the exact solution. \rev{The source term $s(\mb{x})$ is derived by the exact solution and is listed in the supplementary \ref{sec_supp_exprm}.} The error is defined to be the same as in \citep{tangdas}.
Figure~\ref{fig:error_sample_10dnlexp} shows the results of the ten-dimensional nonlinear test problem. Specifically, Figure~\ref{fig:error_sample_10dnlexp}(a) shows the error behavior during training for different regularization parameters, and Figure~\ref{fig:error_sample_10dnlexp}(b) shows the evolution of the residual variance. Figure~\ref{fig:error_sample_10dnlexp}(c) shows the samples during the adversarial training process, where we select the components $x_1$ and $x_2$ for visualization. We have also checked the other components, and the results are similar. It is seen that the training set finally becomes nonuniform to get a small residual variance. The results of different adaptive sampling strategies for the three test problems are summarized in Table \ref{table_10dnlexp}.

\begin{figure}[ht]
	\centering
	\subfloat[][]{\includegraphics[width=.325\textwidth]{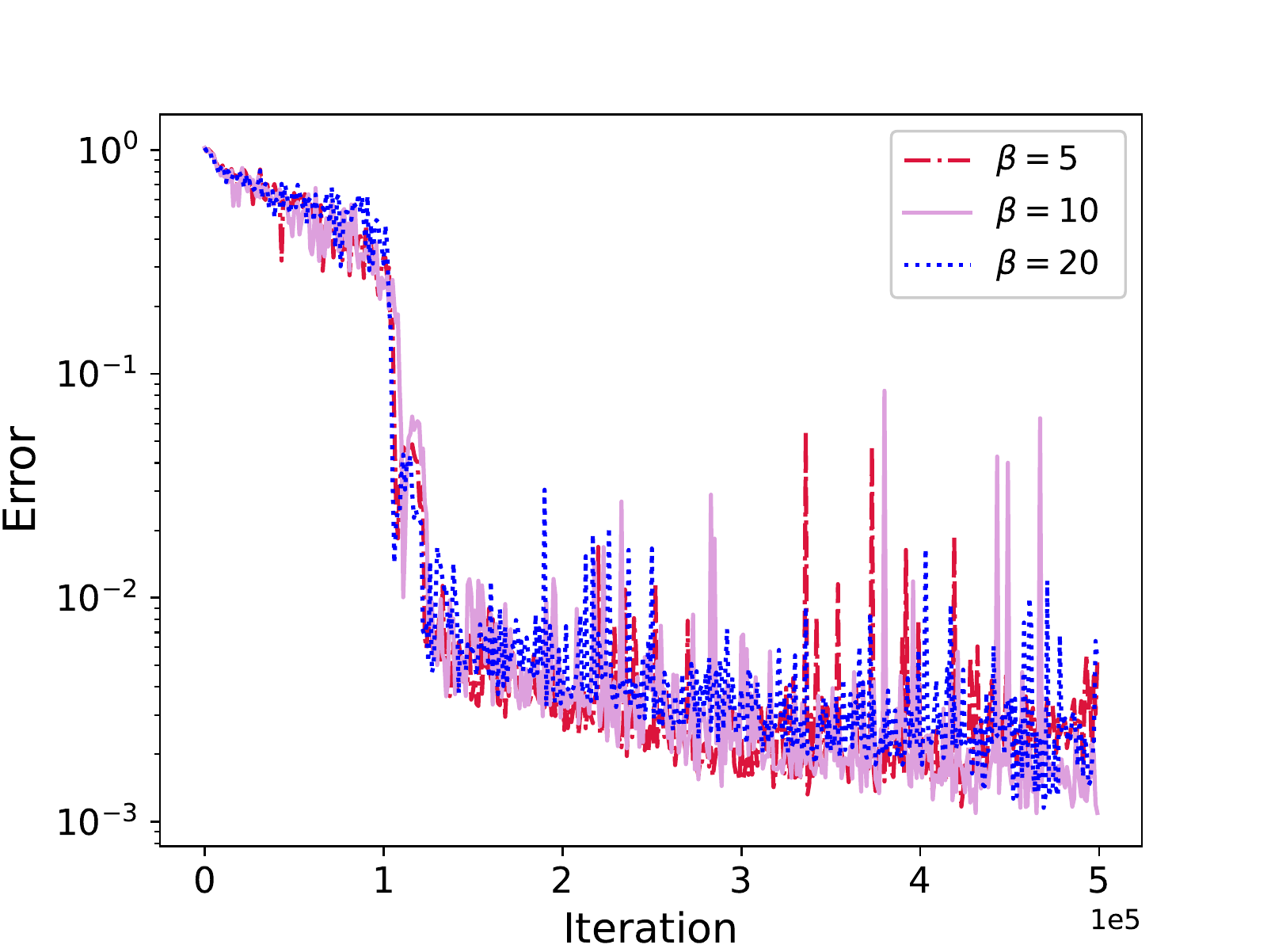}}\
	\subfloat[][]{\includegraphics[width=.325\textwidth]{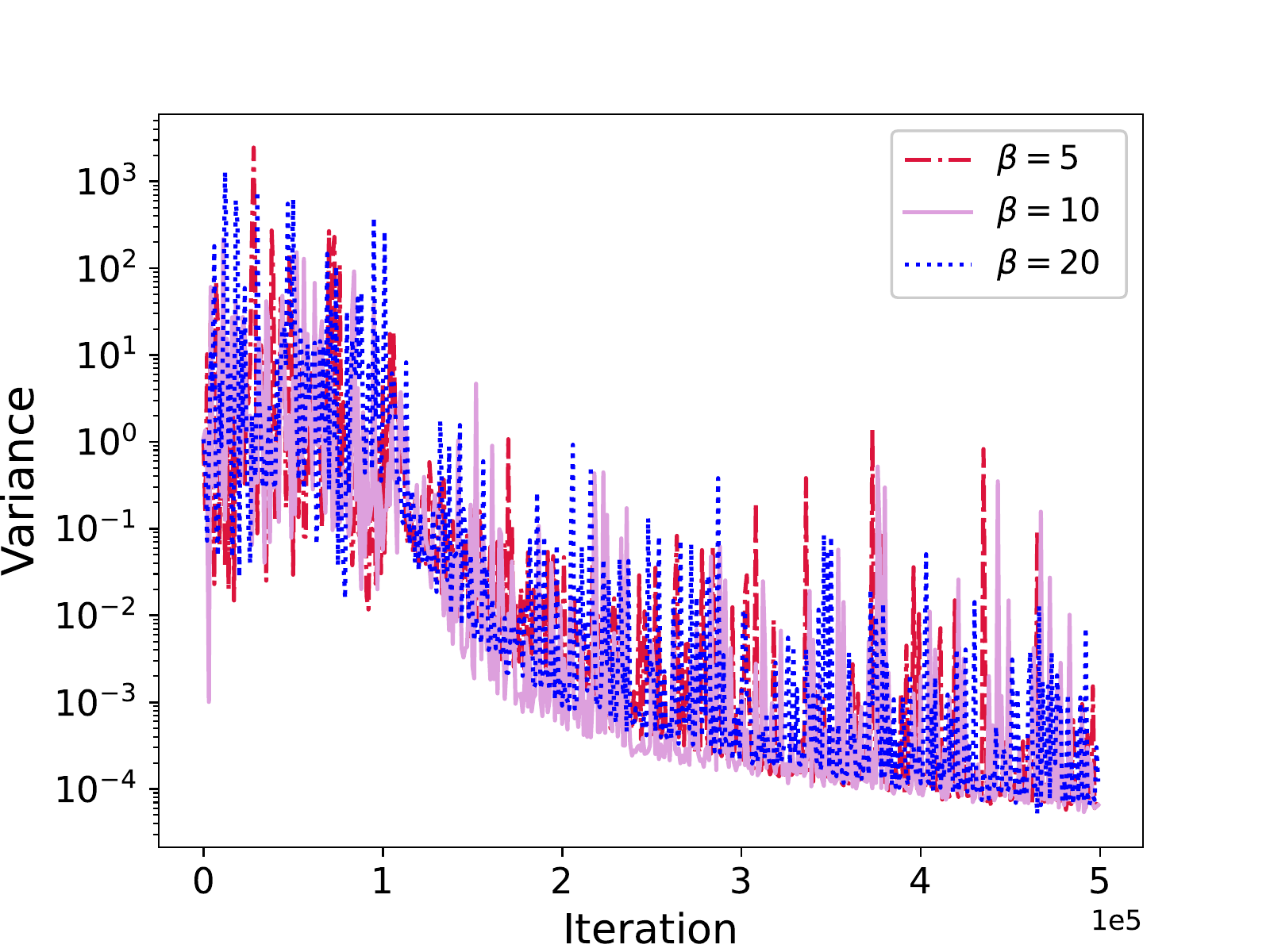}}\
	\subfloat[][]{\includegraphics[width=.338\textwidth]{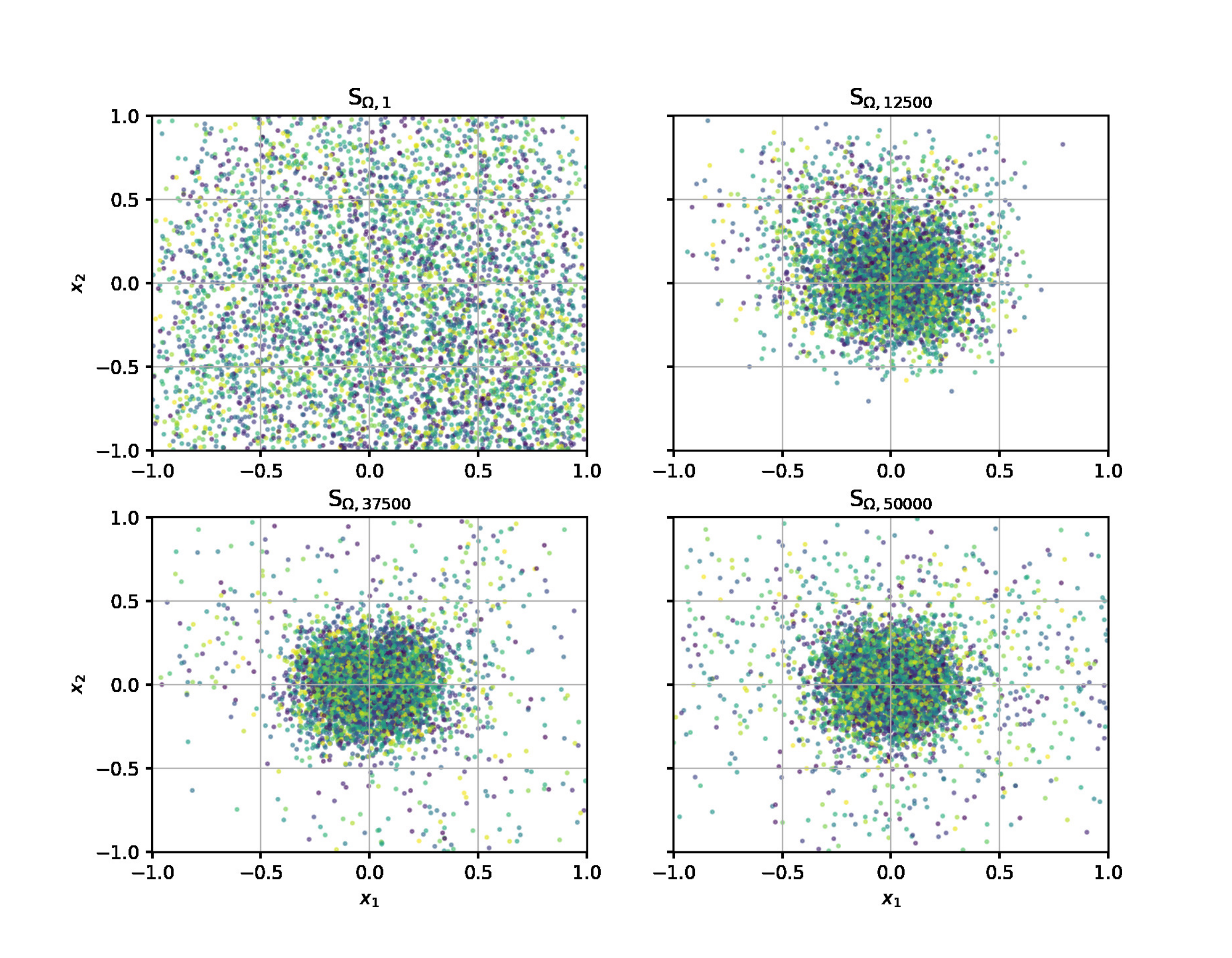}}\quad
	\caption{The results of the ten-dimensional nonlinear test problem. (a) The error behavior. (b) The variance behaviour. (c) The evolution of the training set, $x_1 - x_2$ plane ($\beta = 10$).} 
	\label{fig:error_sample_10dnlexp}
\end{figure}

\begin{table}[!htp]
	\caption{Error comparison of adaptive sampling methods} 
	\centering	
	\begin{tabular}{ccccccc}  
		\hline
        \diagbox{Method}{Test problem} & PDE \eqref{eq_pde_peak} & PDE \eqref{eq_two_peaks_pde} & PDE \eqref{eq_pde_10dnlexp} \\
        \hline
		PINN & \rev{9.74e-04} & \rev{3.22e-02}  &  \rev{1.01}  \\ 
  	RAR \citep{lu2021deepxde} & - & - & 9.83e-01 \\
		DAS-G \citep{tangdas} & 3.75e-04  & 1.51e-03  & 9.55e-03 & \\
		DAS-R \citep{tangdas} & 1.93e-04  &  6.21e-03 & 1.26e-02 \\
        AAS (this work) & \textbf{2.97e-05} & \textbf{1.09e-04} & \textbf{1.31e-03} \\
		\hline
		
	\end{tabular}
	\label{table_10dnlexp}
\end{table}

\section{Conclusions}
We developed a novel adversarial adaptive sampling (AAS) approach that unifies PINN and optimal transport for neural network approximation of PDEs. With AAS, the evolution of the training set can be investigated in terms of the optimal transport theory, and numerical results have demonstrated the importance of random samples for training PINN more effectively.

\bigskip
\textbf{Acknowledgments:}
K. Tang has been supported by the China Postdoctoral Science Foundation grant 2022M711730. J. Zhai is supported by the start-up fund of ShanghaiTech University (2022F0303-000-11). X. Wan has been supported by NSF grant DMS-1913163. C. Yang has been supported by NSFC grant 12131002.

\bibliography{iclr2024_conference}

\begin{thebibliography}{37}
\providecommand{\natexlab}[1]{#1}
\providecommand{\url}[1]{\texttt{#1}}
\expandafter\ifx\csname urlstyle\endcsname\relax
  \providecommand{\doi}[1]{doi: #1}\else
  \providecommand{\doi}{doi: \begingroup \urlstyle{rm}\Url}\fi

\bibitem[Adams \& John~Fournier(2003)Adams and John~Fournier]{adamsSS}
Robert~A Adams and J~F John~Fournier.
\newblock \emph{Sobolev Spaces, 2nd Edition}.
\newblock Elsevier, Amsterdam, 2003.

\bibitem[Arjovsky et~al.(2017)Arjovsky, Chintala, and
  Bottou]{arjovsky2017wasserstein}
Martin Arjovsky, Soumith Chintala, and L{\'e}on Bottou.
\newblock Wasserstein generative adversarial networks.
\newblock In \emph{International Conference on Machine Learning}, pp.\
  214--223. PMLR, 2017.

\bibitem[Berg \& Nystr{\"o}m(2018)Berg and Nystr{\"o}m]{berg2018unified}
Jens Berg and Kaj Nystr{\"o}m.
\newblock A unified deep artificial neural network approach to partial
  differential equations in complex geometries.
\newblock \emph{Neurocomputing}, 317:\penalty0 28--41, 2018.

\bibitem[E \& Yu(2018)E and Yu]{weinan2018deep}
Weinan E and Bing Yu.
\newblock The deep {R}itz method: A deep learning-based numerical algorithm for
  solving variational problems.
\newblock \emph{Communications in Mathematics and Statistics}, 6\penalty0
  (1):\penalty0 1--12, 2018.

\bibitem[Elman et~al.(2014)Elman, Silvester, and Wathen]{elman2014finite}
Howard~C Elman, David~J Silvester, and Andrew~J Wathen.
\newblock \emph{Finite elements and fast iterative solvers: With applications
  in incompressible fluid dynamics}.
\newblock Oxford University Press, USA, 2014.

\bibitem[Evans(2010)]{EvansPDE}
Lawrence~C Evans.
\newblock \emph{Partial Differential Equations}.
\newblock American Mathematical Soc., 2010.

\bibitem[Gao \& Wang(2023)Gao and Wang]{gao2021active}
Wenhan Gao and Chunmei Wang.
\newblock Active learning based sampling for high-dimensional nonlinear partial
  differential equations.
\newblock \emph{Journal of Computational Physics}, 475:\penalty0 111848, 2023.

\bibitem[Gao et~al.(2022)Gao, Yan, and Zhou]{gao2022failure}
Zhiwei Gao, Liang Yan, and Tao Zhou.
\newblock Failure-informed adaptive sampling for {PINN}s.
\newblock \emph{arXiv preprint arXiv:2210.00279}, 2022.

\bibitem[Ghosh et~al.(2022)Ghosh, Anantha~Padmanabha, Peng, Andreoli, Atkinson,
  Pandita, Vandeputte, Zabaras, and Wang]{ghosh2022inverse}
Sayan Ghosh, Govinda Anantha~Padmanabha, Cheng Peng, Valeria Andreoli, Steven
  Atkinson, Piyush Pandita, Thomas Vandeputte, Nicholas Zabaras, and Liping
  Wang.
\newblock Inverse aerodynamic design of gas turbine blades using probabilistic
  machine learning.
\newblock \emph{Journal of Mechanical Design}, 144\penalty0 (2), 2022.

\bibitem[Goodfellow et~al.(2014)Goodfellow, Pouget-Abadie, Mirza, Xu,
  Warde-Farley, Ozair, Courville, and Bengio]{goodfellow2014generative}
Ian Goodfellow, Jean Pouget-Abadie, Mehdi Mirza, Bing Xu, David Warde-Farley,
  Sherjil Ozair, Aaron Courville, and Yoshua Bengio.
\newblock Generative adversarial nets.
\newblock In \emph{Advances in {N}eural {I}nformation {P}rocessing {S}ystems},
  pp.\  2672--2680, 2014.

\bibitem[Gulrajani et~al.(2017)Gulrajani, Ahmed, Arjovsky, Dumoulin, and
  Courville]{gulrajani2017improved}
Ishaan Gulrajani, Faruk Ahmed, Martin Arjovsky, Vincent Dumoulin, and Aaron~C
  Courville.
\newblock Improved training of {W}asserstein {GAN}s.
\newblock \emph{Advances in Neural Information Processing Systems}, 30, 2017.

\bibitem[Han et~al.(2018)Han, Jentzen, and Weinan]{han2018solving}
Jiequn Han, Arnulf Jentzen, and E~Weinan.
\newblock Solving high-dimensional partial differential equations using deep
  learning.
\newblock \emph{Proceedings of the National Academy of Sciences}, 115\penalty0
  (34):\penalty0 8505--8510, 2018.

\bibitem[Jiao et~al.(2023)Jiao, Li, Lu, Yang, and Yuan]{jiao2023gas}
Yuling Jiao, Di~Li, Xiliang Lu, Jerry~Zhijian Yang, and Cheng Yuan.
\newblock G{AS}: {A} {G}aussian mixture distribution-based adaptive sampling
  method for {PINN}s.
\newblock \emph{arXiv preprint arXiv:2303.15849}, 2023.

\bibitem[Karniadakis et~al.(2021)Karniadakis, Kevrekidis, Lu, Perdikaris, Wang,
  and Yang]{karniadakis2021physics}
George~Em Karniadakis, Ioannis~G Kevrekidis, Lu~Lu, Paris Perdikaris, Sifan
  Wang, and Liu Yang.
\newblock Physics-informed machine learning.
\newblock \emph{Nature Reviews Physics}, 3\penalty0 (6):\penalty0 422--440,
  2021.

\bibitem[Kingma \& Ba(2017)Kingma and Ba]{kingma2017adam}
Diederik~P Kingma and Jimmy Ba.
\newblock Adam: A method for stochastic optimization.
\newblock \emph{arXiv preprint arXiv:1412.6980}, 2017.

\bibitem[Lu et~al.(2021)Lu, Meng, Mao, and Karniadakis]{lu2021deepxde}
Lu~Lu, Xuhui Meng, Zhiping Mao, and George~Em Karniadakis.
\newblock Deep{XDE}: A deep learning library for solving differential
  equations.
\newblock \emph{SIAM Review}, 63\penalty0 (1):\penalty0 208--228, 2021.

\bibitem[Mekchay \& Nochetto(2005)Mekchay and Nochetto]{mekchay2005convergence}
Khamron Mekchay and Ricardo~H Nochetto.
\newblock Convergence of adaptive finite element methods for general second
  order linear elliptic {PDE}s.
\newblock \emph{SIAM Journal on Numerical Analysis}, 43\penalty0 (5):\penalty0
  1803--1827, 2005.

\bibitem[Mitchell(2013)]{mitchell2013collection}
William~F Mitchell.
\newblock A collection of 2{D} elliptic problems for testing adaptive grid
  refinement algorithms.
\newblock \emph{Applied {M}athematics and {C}omputation}, 220:\penalty0
  350--364, 2013.

\bibitem[Morin et~al.(2002)Morin, Nochetto, and Siebert]{morin2002convergence}
Pedro Morin, Ricardo~H Nochetto, and Kunibert~G Siebert.
\newblock Convergence of adaptive finite element methods.
\newblock \emph{SIAM Review}, 44\penalty0 (4):\penalty0 631--658, 2002.

\bibitem[Raissi et~al.(2019)Raissi, Perdikaris, and
  Karniadakis]{raissi2019physics}
Maziar Raissi, Paris Perdikaris, and George~E Karniadakis.
\newblock Physics-informed neural networks: A deep learning framework for
  solving forward and inverse problems involving nonlinear partial differential
  equations.
\newblock \emph{Journal of Computational Physics}, 378:\penalty0 686--707,
  2019.

\bibitem[Sheng \& Yang(2021)Sheng and Yang]{sheng2021pfnn}
Hailong Sheng and Chao Yang.
\newblock P{FNN}: A penalty-free neural network method for solving a class of
  second-order boundary-value problems on complex geometries.
\newblock \emph{Journal of Computational Physics}, 428:\penalty0 110085, 2021.

\bibitem[Sirignano \& Spiliopoulos(2018)Sirignano and
  Spiliopoulos]{sirignano2018dgm}
Justin Sirignano and Konstantinos Spiliopoulos.
\newblock D{GM}: A deep learning algorithm for solving partial differential
  equations.
\newblock \emph{Journal of Computational Physics}, 375:\penalty0 1339--1364,
  2018.

\bibitem[Tang et~al.(2022)Tang, Wan, and Liao]{tang2021adaptive}
Kejun Tang, Xiaoliang Wan, and Qifeng Liao.
\newblock Adaptive deep density approximation for {F}okker-{P}lanck equations.
\newblock \emph{Journal of Computational Physics}, 457:\penalty0 111080, 2022.

\bibitem[Tang et~al.(2023)Tang, Wan, and Yang]{tangdas}
Kejun Tang, Xiaoliang Wan, and Chao Yang.
\newblock D{AS-PINN}s: {A} deep adaptive sampling method for solving
  high-dimensional partial differential equations.
\newblock \emph{Journal of Computational Physics}, 476:\penalty0 111868, 2023.

\bibitem[Taylor(2011)]{taylorPDE}
Michael~E Taylor.
\newblock \emph{Partial Differential Equations I: Basic Theory, 2nd Edition}.
\newblock Springer, 2011.

\bibitem[Villani(2003)]{Villani_book}
C\'{e}dric Villani.
\newblock \emph{Topics in Optimal Transportation}.
\newblock Number~58 in Graduate Studies in Mathematics. American Mathematical
  Society, 2003.

\bibitem[Weinan(2021)]{weinan2021dawning}
E~Weinan.
\newblock The dawning of a new era in applied mathematics.
\newblock \emph{Notices of the American Mathematical Society}, 68\penalty0
  (4):\penalty0 565--571, 2021.

\bibitem[Wu et~al.(2023)Wu, Zhu, Tan, Kartha, and Lu]{wu2023comprehensive}
Chenxi Wu, Min Zhu, Qinyang Tan, Yadhu Kartha, and Lu~Lu.
\newblock A comprehensive study of non-adaptive and residual-based adaptive
  sampling for physics-informed neural networks.
\newblock \emph{Computer Methods in Applied Mechanics and Engineering},
  403:\penalty0 115671, 2023.

\bibitem[Xiu \& Karniadakis(2003)Xiu and Karniadakis]{xiu2003modeling}
Dongbin Xiu and George~Em Karniadakis.
\newblock Modeling uncertainty in flow simulations via generalized polynomial
  chaos.
\newblock \emph{Journal of Computational Physics}, 187\penalty0 (1):\penalty0
  137--167, 2003.

\bibitem[Yin et~al.(2023)Yin, Xiao, Tang, and Yang]{yin2023aonn}
Pengfei Yin, Guangqiang Xiao, Kejun Tang, and Chao Yang.
\newblock A{ONN}: An adjoint-oriented neural network method for all-at-once
  solutions of parametric optimal control problems.
\newblock \emph{arXiv preprint arXiv:2302.02076}, 2023.

\bibitem[Yu et~al.(2022)Yu, Lu, Meng, and Karniadakis]{yu2022gradient}
Jeremy Yu, Lu~Lu, Xuhui Meng, and George~Em Karniadakis.
\newblock Gradient-enhanced physics-informed neural networks for forward and
  inverse {PDE} problems.
\newblock \emph{Computer Methods in Applied Mechanics and Engineering},
  393:\penalty0 114823, 2022.

\bibitem[Zang et~al.(2020)Zang, Bao, Ye, and Zhou]{zang2020weak}
Yaohua Zang, Gang Bao, Xiaojing Ye, and Haomin Zhou.
\newblock Weak adversarial networks for high-dimensional partial differential
  equations.
\newblock \emph{Journal of Computational Physics}, 411:\penalty0 109409, 2020.

\bibitem[Zeng et~al.(2023)Zeng, Wan, and Zhou]{zeng2023BKRnet}
Li~Zeng, Xiaoliang Wan, and Tao Zhou.
\newblock Bounded {KR}net and its applications to density estimation and
  approximation.
\newblock \emph{arXiv:2305.09063}, 2023.

\bibitem[Zeng et~al.(2022)Zeng, Bryngelson, and Schaefer]{zeng2022competitive}
Qi~Zeng, Spencer~H Bryngelson, and Florian~Tobias Schaefer.
\newblock Competitive physics informed networks.
\newblock In \emph{ICLR 2022 Workshop on Gamification and Multiagent
  Solutions}, 2022.

\bibitem[Zhai et~al.(2022)Zhai, Dobson, and Li]{zhai2022deep}
Jiayu Zhai, Matthew Dobson, and Yao Li.
\newblock A deep learning method for solving {F}okker-{P}lanck equations.
\newblock In \emph{Mathematical and Scientific Machine Learning}, pp.\
  568--597. PMLR, 2022.

\bibitem[Zhu \& Zabaras(2018)Zhu and Zabaras]{zhu2018bayesian}
Yinhao Zhu and Nicholas Zabaras.
\newblock Bayesian deep convolutional encoder--decoder networks for surrogate
  modeling and uncertainty quantification.
\newblock \emph{Journal of Computational Physics}, 366:\penalty0 415--447,
  2018.

\bibitem[Zhu et~al.(2019)Zhu, Zabaras, Koutsourelakis, and
  Perdikaris]{zhu2019physics}
Yinhao Zhu, Nicholas Zabaras, Phaedon-Stelios Koutsourelakis, and Paris
  Perdikaris.
\newblock Physics-constrained deep learning for high-dimensional surrogate
  modeling and uncertainty quantification without labeled data.
\newblock \emph{Journal of Computational Physics}, 394:\penalty0 56--81, 2019.

\end{thebibliography}
\bibliographystyle{iclr2024_conference}

\newpage
\appendix
\section{Appendix}
We add the proof of Theorem \ref{Main_theorem2} and additional numerical experiments here.
\subsection{Preliminaries from optimal transport theory}
\begin{de}
	Suppose $X$ is a metric space equipped with the metric $d(\bx,\by)$, and $\mu$ and $\nu$ are two probability measures on $X$. The \textit{Wasserstein distance} (as known as the \textit{Kantorovich–Rubinstein metric}) $d_{W^d}(\mu,\nu)$ between to probability measures $\mu$ and $\nu$ for the metric function $d(\bx,\by)$ is defined to be 
	\begin{equation*}
		d_{W^d}(\mu,\nu) = \inf_{\pi\in\Pi(X\times X)} \int_{X\times X} d(\bx,\by)\,d\pi(\bx,\by),
	\end{equation*}
	where $\Pi(X\times X)$ is the collection of all probability measure on $X\times X$ such that
	$$\pi(A\times X) = \mu(A),\quad \pi(X\times B) = \nu(B)$$
	for all measurable sets $A,B\subset X$.
\end{de}
For the analysis of the adaptive algorithm in this work, we consider the metric $d_M(\bx,\by)$ induced by the Euclidean metric $d(\bx,\by)=\|\bx-\by\|_2$
\begin{equation*}
	d_M(\bx,\by) = \min\{M,d(\bx,\by)\},\quad \bx, \by\in X.
\end{equation*}
Then the metric $d_M(\bx,\by)$ is always bounded by $M$ (reachable, namely $\|d_M\|_{\infty} = M$). We denote the Wasserstein distance for $d_M(\bx,\by)$ by $d_{W^M}(\cdot,\cdot)$.

According to the optimal transport theory, the Wasserstein distance can be described by its dual form (see e.g. \citep{Villani_book}, Theorem 1.14 and Remark 1.15 on Page 34).
\begin{thm}[Kantorovich-Rubinstein theorem]
	Let $X$ be a Polish space and let $d$ be a lower semi-continuous metric on $X$. Let $\|\cdot\|_{\text{Lip}}$ denote the Lipschitz norm of a function defined as
	\begin{equation*}
		\|\phi\|_{\text{Lip}} = \sup_{\bx \neq \by}\frac{|\phi(\bx)-\phi(\by)|}{d(\bx,\by)}.
	\end{equation*}
	Then 
	\begin{equation*}
		d_{W^M}(\mu,\nu) = \sup\Big\{\int_X \phi(\bx)\,d(\mu-\nu)(\bx)\Big|~ 0\leq \phi(\bx)\leq\|d_M\|_{\infty} = M, \text{ and }\|\phi\|_{\text{Lip}}\leq 1\Big\}.
	\end{equation*}
\end{thm}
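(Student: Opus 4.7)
The plan is to establish the two inequalities separately: a direct computation for $\sup \leq d_{W^M}$, and an appeal to the classical Kantorovich--Rubinstein duality applied to the bounded metric $d_M$ for the reverse inequality.

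For the $\leq$ direction, fix any admissible test function $\phi$ (so $0 \leq \phi \leq M$ and $\|\phi\|_{\text{Lip}} \leq 1$) and any coupling $\pi \in \Pi(X\times X)$ with marginals $\mu$ and $\nu$. Then
\begin{equation*}
\int_X \phi \, d(\mu-\nu) = \int_{X\times X} \bigl(\phi(\bx)-\phi(\by)\bigr) d\pi(\bx,\by).
\end{equation*}
The Lipschitz condition gives $|\phi(\bx)-\phi(\by)| \leq d(\bx,\by)$, while the range constraint gives $|\phi(\bx)-\phi(\by)| \leq M$; together they yield $|\phi(\bx)-\phi(\by)| \leq d_M(\bx,\by)$. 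Integrating, then taking the infimum over $\pi$, and finally the supremum over $\phi$, gives the desired inequality.

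For the $\geq$ direction, I would first verify that $d_M(\bx,\by) := \min\{M, d(\bx,\by)\}$ is itself a lower semi-continuous (bounded) metric on $X$: positivity and symmetry are immediate, and the triangle inequality is a short case analysis on whether either summand saturates at $M$. The classical Kantorovich--Rubinstein theorem applied to the metric $d_M$ then says
\begin{equation*}
d_{W^M}(\mu,\nu) = \sup\Bigl\{\int_X \psi \, d(\mu-\nu) : |\psi(\bx)-\psi(\by)| \leq d_M(\bx,\by) \ \forall \bx,\by\Bigr\}.
\end{equation*}
A function $\psi$ is $1$-Lipschitz with respect to $d_M$ if and only if it is $1$-Lipschitz with respect to $d$ and has oscillation at most $M$ (since $d_M = d$ whenever $d \leq M$ and equals $M$ otherwise, so the $d_M$-Lipschitz bound is automatic on pairs where $d(\bx,\by) > M$). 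Moreover, $\int \psi \, d(\mu-\nu)$ is invariant under adding constants to $\psi$, so by the oscillation bound I may normalize to $\phi := \psi - \inf \psi$, which satisfies $0 \leq \phi \leq M$ and still $\|\phi\|_{\text{Lip}}\leq 1$ in $d$. This shows the sup in the theorem statement is at least $d_{W^M}(\mu,\nu)$.

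The main obstacle is the cited Kantorovich--Rubinstein duality itself, whose proof rests on the general Kantorovich duality for lower semi-continuous costs on Polish spaces (via Fenchel--Rockafellar or a minimax argument), combined with a $c$-transform argument showing that for a metric cost $c$ one may take the optimal dual pair in the form $(\phi,-\phi)$ with $\phi$ being $1$-Lipschitz with respect to $c$. I would not reprove that here; instead the novel content is only the reduction from a $d_M$-Lipschitz supremum to the stated sup over $\phi$ with range in $[0, M]$ and $\|\phi\|_{\text{Lip}}\leq 1$ in $d$, via the two steps above: checking that $d_M$ is a metric, and the oscillation-based normalization of $\psi$ into $[0, M]$.
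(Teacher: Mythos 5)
Your proposal is correct. Note that the paper itself does not prove this statement at all: it is stated as a preliminary and attributed to \citet{Villani_book} (Theorem 1.14 together with Remark 1.15), so there is no internal proof to compare against step by step. What you supply is precisely the content that the paper delegates to that citation: the easy inequality via $\int_X \phi\,d(\mu-\nu)=\int_{X\times X}(\phi(\bx)-\phi(\by))\,d\pi$ and the pointwise bound $|\phi(\bx)-\phi(\by)|\leq \min\{d(\bx,\by),M\}=d_M(\bx,\by)$, and the hard inequality by applying the classical Kantorovich--Rubinstein duality to the truncated cost $d_M$, after checking that $d_M$ is a lower semi-continuous metric and observing that $1$-Lipschitz with respect to $d_M$ is equivalent to ($1$-Lipschitz with respect to $d$ and oscillation at most $M$), so that the shift $\phi=\psi-\inf\psi$ (which leaves $\int\psi\,d(\mu-\nu)$ unchanged because $\mu$ and $\nu$ have equal mass) lands in the admissible class $0\leq\phi\leq M$. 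This is essentially the argument behind Villani's Remark 1.15, so your route is consistent with, and more self-contained than, the paper's treatment; the only component you do not reprove is the general duality theorem for lower semi-continuous metric costs on Polish spaces, and deferring that is reasonable since the paper defers the entire statement. One small point worth acknowledging if you write this out: integrability/measurability of the test functions (e.g.\ that $\phi$ is $|\mu-\nu|$-integrable, which here follows from boundedness) is part of the hypotheses in the classical theorem you invoke, and should be mentioned when you transfer between the $d_M$-Lipschitz class and the stated class.
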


In this work, we restrict ourselves on a compact domain $X=\Omega\subset\mathbb{R}^D$ of learning, and without loss of generality, we assume the Lebesgue measure of $\Omega$ is $1$.

\subsection{The first convergence theorem and its proof}\label{Section:Main_theorem}
\begin{thm}\label{Main_theorem}
	Let $\mu$ be the Lebesgue measure on $X$, which represents the uniform probability distribution on $\Omega$. In addition, we assume Assumption \ref{assumption2} holds.
	
	Then the optimal value of the min-max problem \eqref{vanilla-minmax-r} is $0$. Moreover, there is a sequence $\{u_n\}_{n = 1}^{\infty}$ of functions with $r(u_n)\neq 0$ for all $n$, such that it is an optimization sequence of problem \eqref{vanilla-minmax-r}, namely,
	\begin{equation}\label{limit=0}
		\lim_{n\rightarrow\infty} \mathcal{J}(u_n, p_n) = 0.
	\end{equation}
	for some sequence of functions $\{p_n\}_{n = 1}^{\infty}\subset V$. Meanwhile, this optimization sequence has the following two properties:
	\begin{enumerate}
		\item The residual sequence $\{r(u_n)\}_{n = 1}^{\infty}$ of $\{u_n\}_{n = 1}^{\infty}$ converges to $0$ in $L^2(d\mu)$.
		\item The renormalized squared residual distributions
		\begin{equation}\label{Residual_distribution}
			d\nu_n \triangleq \frac{r^2(u_n)}{\int_{\Omega} r^2(u_n(\bx))\,d\bx}\,d\mu
		\end{equation}
		converge to the uniform distribution $\mu$ in the Wasserstein distance $d_{W^M}$.
	\end{enumerate}
\end{thm}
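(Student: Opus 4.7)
The plan is to exploit the special structure of $V$: since $0\le p\le M$ for every $p\in V$, and the constant function $p\equiv M$ itself lies in $V$ (constants being $0$-Lipschitz), the inner maximisation collapses to $\sup_{p\in V}\int_\Omega r^2(u)p\,d\bx = M\int_\Omega r^2(u)\,d\bx$. So the min-max value reduces to $M\cdot\inf_u \|r(u)\|_{L^2(d\mu)}^2$, and the theorem boils down to producing a sequence $u_n$ whose residuals vanish in $L^2(d\mu)$ while their renormalised squares approach a constant profile. Assumption~\ref{assumption2} is the lever that lets us prescribe the residual directly rather than chasing it through $r$.

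For the construction, I would fix a family of mollified cutoffs $\phi_n\in C^\infty_c(\Omega)$ with $0\le \phi_n\le 1$ and $\phi_n\equiv 1$ on compact sets $\Omega_n\subset \Omega$ with $|\Omega\setminus \Omega_n|\to 0$, pick scalars $\epsilon_n\to 0$, and set $f_n=\epsilon_n\phi_n\in C^\infty_c(\Omega)$. Assumption~\ref{assumption2} then supplies $u_n\in E_1(\mathbb{R}^D)$ with $r(u_n)=f_n\neq 0$, and I pair each $u_n$ with $p_n\equiv M\in V$. The elementary bound
\begin{equation*}
\mathcal{J}(u_n,p_n)=M\epsilon_n^2\int_\Omega \phi_n^2\,d\bx \le M\epsilon_n^2|\Omega| \to 0
\end{equation*}
gives \eqref{limit=0}, shows the min-max value is $0$ (it is obviously nonnegative), and proves Property~1 since $\|r(u_n)\|_{L^2(d\mu)}^2\le \epsilon_n^2|\Omega|$.

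For Property~2, the $\epsilon_n$'s cancel out of the renormalised density $\rho_n(\bx)=\phi_n^2(\bx)/\int_\Omega \phi_n^2$. Because $0\le\phi_n\le 1$ and $\phi_n\to 1$ almost everywhere by construction, dominated convergence yields $\int_\Omega \phi_n^2\to |\Omega|=1$, so $\rho_n\to 1$ in $L^1(d\mu)$; equivalently, $\nu_n\to \mu$ in total variation. Applying the Kantorovich-Rubinstein duality for the bounded metric $d_M$ quoted in the excerpt, and using that every admissible test function is bounded by $M$, one reads off
\begin{equation*}
d_{W^M}(\nu_n,\mu)\le M\,\|\nu_n-\mu\|_{TV}\to 0,
\end{equation*}
which is Property~2.

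The delicate step is aligning the construction with the surjectivity hypothesis: the target $f_n$ must actually live in $C^\infty_c(\Omega)$, which forces $\phi_n$ to vanish near $\partial\Omega$ and prevents it from being exactly the indicator of $\Omega$. A mollified cutoff of a slightly shrunk subdomain resolves this because the boundary layer on which $\phi_n<1$ has measure tending to zero, so the $L^2$ smallness of $r(u_n)$ and the $L^1$ approximation of $1$ by $\rho_n$ are compatible. If Assumption~\ref{assumption2} were weakened to merely density of $r(E_1)$ in $C^\infty_c(\Omega)$, an additional diagonal approximation would be needed; as stated, the construction is clean and no subtleties arise beyond calibrating the cutoff.
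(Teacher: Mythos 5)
Your argument is correct, and at its core it is the same construction as the paper's: invoke Assumption~\ref{assumption2} to manufacture $u_n$ whose squared residual is a small multiple of a smooth approximation of $\mathbbm{1}_{\Omega}$, then observe that the loss, the $L^2$ norm of the residual, and $d_{W^M}(\nu_n,\mu)$ all tend to zero, the last via the dual form with test functions bounded by $M$. The packaging differs in two ways, both to your advantage. First, you evaluate the inner maximization exactly: since $V$ carries no normalization constraint, the constant function $p\equiv M$ is admissible and attains $\sup_{p\in V}\int_\Omega r^2(u)p\,d\bx = M\int_\Omega r^2(u)\,d\bx$, so the min-max value is literally $M\inf_u\|r(u)\|_{L^2}^2$; the paper only upper-bounds the sup by $(d_{W^M}(\nu_n,\mu)+M)\int_\Omega r^2(u_n)\,d\bx$ and correspondingly starts from a minimizing sequence of the plain residual problem before correcting it to $\tilde u_n$. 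Second, you prescribe the residual itself, $r(u_n)=\epsilon_n\phi_n\in C^\infty_c(\Omega)$, so that $r^2(u_n)=\epsilon_n^2\phi_n^2$ is automatically the square of an admissible target; the paper instead prescribes $r^2(\tilde u_n)=w_n\int_\Omega r^2(u_n)\,d\bx$, which implicitly requires $\sqrt{w_n}$ (times a constant) to lie in $C^\infty_c(\Omega)$ — a point it does not address and which your route sidesteps. Your total-variation bound $d_{W^M}(\nu_n,\mu)\le M\|\nu_n-\mu\|_{TV}$ is the same computation the paper does directly with the dual form (up to the usual factor-of-two convention in the definition of TV, which is immaterial), and the only cosmetic remark is that dominated convergence is not even needed: $\int_\Omega|\phi_n^2-1|\,d\bx\le|\Omega\setminus\Omega_n|\to 0$ gives the $L^1$ convergence outright.
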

\begin{proof}
	Consider a minimizing sequence $u_n, n = 1, 2, \dots$ of 
	\begin{equation}\label{initial_minimizing-seq}
		\inf_{u} \int_\Omega r^2(u(\bx))\,d\bx,
	\end{equation}
	where without loss of generality, we can assume that $\int_\Omega r^2(u_n(\bx))\,d\bx \leq \frac{1}{n}$.
	
	Now
	\begin{align}
		& \sup_{\substack{\|p\|_{\text{Lip}}\leq 1\\ 0\leq p\leq M}} \mathcal{J}(u_n, p) \notag\\
		& = \sup_{\substack{\|p\|_{\text{Lip}}\leq 1\\ 0\leq p\leq M}} \Big[\int_\Omega r^2(u_n(\bx))p(\bx)\,d\bx - \int_\Omega r^2(u_n(\bx))\,d\bx \int_\Omega p(\bx)\,d\bx + \int_\Omega r^2(u_n(\bx))\,d\bx\int_\Omega p(\bx)\,d\bx \Big] \notag\\
		& \leq \int_\Omega r^2(u_n(\bx))\,d\bx \Big(\sup_{\substack{\|p\|_{\text{Lip}}\leq 1\\ 0\leq p\leq M}} \Big[\int_\Omega p(\bx)\,d\nu_n(\bx) - \int_\Omega p(\bx)\,d\bx\Big] + \sup_{\substack{\|p\|_{\text{Lip}}\leq 1\\ 0\leq p\leq M}} \int_\Omega p(\bx)\,d\bx \Big) \notag\\
		& = (d_{W^M}(\nu_n, \mu)+M)\int_\Omega r^2(u_n(\bx))\,d\bx. \label{wass_bounded}
	\end{align}
	By the assumption of the theorem, for each $n$, we can find a function $\Tilde{u}_n(\bx)$ so that the Wasserstein distance $d_{W^M}(\Tilde{\nu}_n, \mu)\leq \frac{1}{n}$, 
	where $\Tilde{\nu}_n$ is the measure defined as in \eqref{Residual_distribution} by replacing $u_n(\bx)$ with $\Tilde{u}_n(\bx)$. In fact, for each $n$, we can find, by partition of unity, a sequence of functions in $C^{\infty}_c(\Omega)$ converging to $\mathbbm{1}_{\Omega}$ in the Sobolev norm of $W^{k,1}$ (See for example \citep{EvansPDE}). So we can find a
		function $w_n$ in $C^{\infty}_c(\Omega)$, such that $\|w_n(\bx)-\mathbbm{1}_{\Omega}(\bx)\|_{1}\leq\frac{1}{n}$ on $\Omega$. 
	Since $r$ is a surjection, there is some $\Tilde{u}_n(\bx)$ so that
	$$r^2(\Tilde{u}_n)=w_n \int_\Omega r^2(u_n(\bx))\,d\bx,$$
	and
	\begin{align}
		\int_\Omega r^2(\Tilde{u}_n)\,d\bx 
		& = \int_{\Omega} w_n(\bx)\,d\bx \int_\Omega r^2(u_n(\bx))\,d\bx \notag\\
		& \leq (1+\int_{\Omega} \mathbbm{1}_{\Omega}(\bx)\,d\bx) \int_\Omega r^2(u_n(\bx))\,d\bx \notag\\
		& = 2\int_\Omega r^2(u_n(\bx))\,d\bx. \notag
	\end{align}
	This means $\{\Tilde{u}_n\}_{n=1}^{\infty}$ is also a minimizing sequence of \eqref{initial_minimizing-seq}, and it yields
	\begin{align*}
		d_{W^M}(\Tilde{\nu}_n, \mu)
		& = \sup_{\substack{\|p\|_{\text{Lip}}\leq 1\\ 0\leq p\leq M}} \Big[\int_\Omega p(\bx)\,d\Tilde{\nu}_n(\bx) - \int_\Omega p(\bx)\,d\bx\Big] \notag\\
		& = \sup_{\substack{\|p\|_{\text{Lip}}\leq 1\\ 0\leq p\leq M}} \int_\Omega p(\bx)\Big[\frac{r^2(\Tilde{u}_n)(\bx)}{\int_\Omega r^2(\Tilde{u}_n(\bx))\,d\bx}-\mathbbm{1}_{\Omega}(\bx)\Big]\,d\bx \notag\\
		& = \sup_{\substack{\|p\|_{\text{Lip}}\leq 1\\ 0\leq p\leq M}} \int_\Omega p(\bx)\big[w_n(\bx)-\mathbbm{1}_{\Omega}(\bx)\big]\,d\bx \notag\\
		& \leq \frac{M}{n}. \label{wass=0}
	\end{align*}
	
	So we get from \eqref{wass_bounded} that
	$$
	0 \leq \lim_{n\rightarrow\infty} \sup_{\substack{\|p\|_{\text{Lip}}\leq 1\\ 0\leq p\leq M}} \mathcal{J}(\Tilde{u}_n, p) \leq 
	\lim_{n\rightarrow\infty} 4M \int_\Omega r^2(u_n)\,d\bx = 0,
	$$
	which means that $\{\Tilde{u}_n\}_{n=1}^{\infty}$ is also a minimizing sequence of \eqref{vanilla-minmax-r}, that is, 
	\begin{equation*}
		\lim_{n\rightarrow\infty} \mathcal{J}(\Tilde{u}_n, p_n) = 0. \tag{\ref{limit=0}},
	\end{equation*}
	for some sequence of functions $\{p_n\}_{n = 1}^{\infty}\subset V$. Meanwhile, we have the following properties of $\Tilde{u}_n$:
	\begin{enumerate}
		\item The residual sequence $\{r(\Tilde{u}_n)\}_{n = 1}^{\infty}$ converges to $0$ in $L^2(d\mu)$, since
		$$\int_\Omega r^2(\Tilde{u}_n)\,d\bx \leq 2\int_\Omega r^2(u_n)\,d\bx \leq \frac{2}{n} \rightarrow 0, \quad\text{as } n \rightarrow \infty$$
		\item The renormalized squared residual distributions
		\begin{equation*}
			d\Tilde{\nu}_n \triangleq \frac{r^2(\Tilde{u}_n)}{\int_{\Omega} r^2(\Tilde{u}_n(\bx))\,d\bx}\,d\mu
		\end{equation*}
		converges to the uniform distribution $\mu$ in the Wasserstein distance $d_{W^M}$.
	\end{enumerate}  
\end{proof}

\subsection{Replacement of the boundedness condition in Theorem \ref{Main_theorem}}
For the boundedness constraint for ``test function" $p$ in \ref{Main_theorem}, we prove that it can be removed in our circumstance. And with the following lemma and its following remark, and Theorem \ref{Main_theorem}, we can obtain our main Theorem \ref{Main_theorem2}, which is stated again with its assumption in the following.
\begin{assumption*}
	The operator $r$ in \eqref{min-max} is a surjection from a function space $E_1(\mathbb{R}^D)$ to $C^{\infty}_c(\Omega)$, the class of $C^{\infty}$ functions that are compactly supported on $\Omega$. 
\end{assumption*}
\begin{thm*}
	Let $\mu$ be the Lebesgue measure on $\mathbb{R}^D$, which represents the uniform probability distribution on $\Omega$. In addition, we assume Assumption \ref{assumption2} holds. Then the optimal value of the min-max problem \eqref{min-max} is $0$. Moreover, there is a sequence $\{u_n\}_{n = 1}^{\infty}$ of functions with $r(u_n)\neq 0$ for all $n$, such that it is an optimization sequence of \eqref{min-max}, namely,
	\begin{equation*}
		\lim_{n\rightarrow\infty} \mathcal{J}(u_n, p_n) = 0,
	\end{equation*}
	for some sequence of functions $\{p_n\}_{n = 1}^{\infty}$ satisfying the constraints in \eqref{min-max}. Meanwhile, this optimization sequence has the following two properties:
	\begin{enumerate}
		\item The residual sequence $\{r(u_n)\}_{n = 1}^{\infty}$ of $\{u_n\}_{n = 1}^{\infty}$ converges to $0$ in $L^2(d\mu)$.
		\item The renormalized squared residual distributions
		\begin{equation*}
			d\nu_n \triangleq \frac{r^2(u_n)}{\int_{\Omega} r^2(u_n(\bx))\,d\bx}\,d\mu(\bx)
		\end{equation*}
		converge to the uniform distribution $\mu$ in the Wasserstein distance $d_{W^M}$.
	\end{enumerate}
\end{thm*}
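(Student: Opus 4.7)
The plan is to reduce Theorem \ref{Main_theorem2} directly to the earlier Theorem \ref{Main_theorem}, whose feasible set $V$ carries an explicit boundedness constant $M$. The key observation is that on the compact domain $\Omega$ with $|\Omega| = 1$, any element of $\hat{V}$ --- a $1$-Lipschitz nonnegative function with unit integral --- is automatically bounded by a constant depending only on the geometry of $\Omega$.

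First I would prove this as a short lemma: for any $p \in \hat{V}$, since $\int_\Omega p(\bx)\,d\bx = |\Omega| = 1$, there must exist some $\bx_0 \in \Omega$ with $p(\bx_0) \leq 1$, and then the Lipschitz bound gives
\begin{equation*}
p(\bx) \leq p(\bx_0) + \|\bx - \bx_0\|_2 \leq 1 + \mathrm{diam}(\Omega) =: M_0
\end{equation*}
for every $\bx \in \Omega$. Choosing $M = M_0$ in the definition of $V$ yields the inclusion $\hat{V} \subset V$, and hence $\sup_{p \in \hat{V}} \mathcal{J}(u_{\mb{\theta}}, p) \leq \sup_{p \in V} \mathcal{J}(u_{\mb{\theta}}, p)$ for every admissible $u_{\mb{\theta}}$. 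Combined with the trivial lower bound $\sup_{p \in \hat{V}} \mathcal{J}(u, p) \geq 0$ (since $r^2 \geq 0$ and $p \geq 0$), Theorem \ref{Main_theorem} immediately gives $\inf_u \sup_{p \in \hat{V}} \mathcal{J}(u, p) = 0$.

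Next I would transfer the optimization sequence. Let $\{\tilde{u}_n\}$ be the sequence produced in the proof of Theorem \ref{Main_theorem}, built from a minimizing sequence of $\int_\Omega r^2(u(\bx))\,d\bx$ together with the surjectivity assumption to shape a nearly uniform renormalized residual profile via $w_n \in C^\infty_c(\Omega)$ approximating $\mathbbm{1}_\Omega$. By construction $w_n \neq 0$, so $r(\tilde{u}_n) \neq 0$ as required. The inclusion $\hat{V} \subset V$ gives
\begin{equation*}
0 \leq \sup_{p \in \hat{V}} \mathcal{J}(\tilde{u}_n, p) \leq \sup_{p \in V} \mathcal{J}(\tilde{u}_n, p) \to 0 \quad \text{as } n \to \infty,
\end{equation*}
and selecting $p_n \in \hat{V}$ with $\mathcal{J}(\tilde{u}_n, p_n) \geq \sup_{p \in \hat{V}} \mathcal{J}(\tilde{u}_n, p) - 1/n$ (possible because $p \equiv 1$ lies in $\hat{V}$) yields $\mathcal{J}(\tilde{u}_n, p_n) \to 0$. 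The two limiting properties --- $L^2(d\mu)$-convergence of $r(\tilde{u}_n)$ to $0$ and convergence in $d_{W^M}$ of the renormalized squared residual distributions to $\mu$ --- depend only on $\{\tilde{u}_n\}$, so they carry over verbatim. I do not foresee a serious obstacle: the crux is the boundedness lemma, which is immediate from compactness and the Lipschitz constraint; the rest is bookkeeping, aided by the fact that $M$ in Theorem \ref{Main_theorem} is an arbitrary fixed positive number and so may be set to $1 + \mathrm{diam}(\Omega)$.
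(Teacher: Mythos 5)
Your proposal is correct and follows essentially the same route as the paper: the paper likewise obtains Theorem \ref{Main_theorem2} by combining Theorem \ref{Main_theorem} with Lemma \ref{lem:lipschitz_bound}, which shows every $1$-Lipschitz probability density on the compact domain $\Omega$ is bounded by $\mathcal{D}(\Omega)+1/\mu(\Omega)$ --- the same constant you get --- so that $\hat{V}\subset V$ and the optimization sequence from Theorem \ref{Main_theorem} carries over. Your mean-value argument for the bound (finding $\bx_0$ with $p(\bx_0)\leq 1$ and invoking the Lipschitz condition) is only a cosmetic variant of the paper's integrate-over-$\by$ derivation.
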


Although the residue $r^2$ is renormalized to a probability distribution for the analysis of the algorithm, itself is not a probability distribution, and not treated as so. Actually, in the implementation of our algorithm, the ``test function" $p$ is seen as sampling distribution density and the residue $r^2$ is just the PDE operator (or any kind of objective function whose minimum is $0$). In the implementation, we establish $p$ as a generative model, that is, an invertible transform between an unknown distribution (an adversarial distribution to the residual distribution if we think the algorithm as a similarity to GANs) and an ``easy-to-sample" distribution such as normal or uniform distribution. So we assume $p$ to be the density function of a probability distribution. Under this assumption, we have the following result.
\begin{lem}\label{lem:lipschitz_bound}
	Let $\Omega$ be a compact subset of $\mathbb{R}^D$. If a positive function $f:\Omega\rightarrow \mathbb{R}$ is $K$-Lipschitz continuous, and $f$ is the density function of a probability distribution, namely, $\int_\Omega f\,d\bx = 1$, then there is some constant $M=M(\Omega,K)$, so that $f\leq M$. In other words, 
	$$
	f\leq M, \quad\forall f\in\mathcal{S}=\big\{f\geq 0\big|\|f\|_{\text{Lip}}\leq K, \text{ and } \int_\Omega f\,d\bx = 1\big\}.
	$$
\end{lem}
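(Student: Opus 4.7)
The plan is to reduce the lemma to an elementary consequence of the Lipschitz condition combined with the averaging identity $\int_{\Omega} f\,d\bx = 1$. The idea is to produce a single point of $\Omega$ at which $f$ is provably not too large, and then transport that pointwise bound to all of $\Omega$ via the Lipschitz estimate.

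Concretely, I would first note that any $f\in\mathcal{S}$ is continuous on the compact set $\Omega$ (as a Lipschitz function), so it attains its minimum $m_f = f(\bx_\star)$ at some $\bx_\star\in\Omega$. Because the mean value of $f$ over $\Omega$ is $1/|\Omega|$, the minimum must satisfy $m_f \leq 1/|\Omega|$. Here $|\Omega|>0$ is automatic: a Lipschitz function on a compact set is bounded, and a bounded function on a Lebesgue null set cannot integrate to $1$. Then I would apply the $K$-Lipschitz estimate based at $\bx_\star$: for every $\bx \in \Omega$,
\begin{equation*}
f(\bx) \;\leq\; f(\bx_\star) + K\,\|\bx - \bx_\star\| \;\leq\; m_f + K\,\mathrm{diam}(\Omega) \;\leq\; \frac{1}{|\Omega|} + K\,\mathrm{diam}(\Omega),
\end{equation*}
so I would take $M(\Omega, K) := 1/|\Omega| + K\,\mathrm{diam}(\Omega)$, which depends only on $\Omega$ and $K$ and is uniform over $\mathcal{S}$.

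No substantive obstacle is anticipated; the only subtle point worth flagging is the verification $|\Omega|>0$ just noted. As a sanity check I considered the alternative route of starting from a point $\bx_0$ where $f$ attains its \emph{maximum} $M_f$, using Lipschitz continuity to deduce $f\geq M_f/2$ on a ball of radius $M_f/(2K)$ about $\bx_0$, and then bounding $M_f$ by integrating; that argument also works but requires a measure-density hypothesis near $\partial\Omega$ (e.g.\ an interior cone condition) to handle the case that $\bx_0$ sits at the boundary, so I would prefer the cleaner minimum-based argument above.
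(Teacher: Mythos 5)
Your proof is correct and is essentially the same argument as the paper's: both exploit $f(\bx)\leq f(\by)+K\,\mathrm{diam}(\Omega)$ together with $\int_\Omega f\,d\bx=1$ and arrive at the identical constant $M=K\,\mathrm{diam}(\Omega)+1/|\Omega|$. The only cosmetic difference is that you evaluate the Lipschitz bound at a minimizer $\bx_\star$ (whose value is at most the mean $1/|\Omega|$), whereas the paper integrates the inequality over $\by\in\Omega$; your explicit remark that $|\Omega|>0$ is a nice touch the paper leaves implicit.
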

\begin{proof}
	For any $x,y\in\Omega$, we have
	$$
	0\leq f(\bx) = f(\bx)-f(\by)+f(\by) \leq K|\bx-\by| + f(\by) \leq K\mathcal{D}(\Omega) + f(\by),
	$$
	where $\mathcal{D}(\Omega)$ is the diameter of $\Omega$. Taking integral with respect to $\by$ over $\Omega$ on both sides, we have
	$$
	0\leq f(\bx)\mu(\Omega) \leq K\mathcal{D}(\Omega)\mu(\Omega) + 1,
	$$
	where $\mu(\Omega)$ is the Lebesgue measure (volume) of $\Omega$, that is, 
	$$
	0\leq f(\bx) \leq K\mathcal{D}(\Omega) + \frac{1}{\mu(\Omega)}.
	$$
	So we have
	\begin{equation*}
		M = M(\Omega,K) = K\mathcal{D}(\Omega) + \frac{1}{\mu(\Omega)}.
	\end{equation*}
\end{proof}

\begin{re}
	The converse of this lemma is also true in the sense that if $f$ is bounded by some constant $M$, then the integral $\int_\Omega f\,d\bx \leq M\mu(\Omega)$, and $f$ can be renormalized into a probability density function with constant $M\mu(\Omega)$. And similar to boundedness for the gradient (or Lipschitz constant) discussed in section  \ref{sec:implementation}, a constant renormalizer will not affect the training procedure. 
\end{re}

\rev{
	\subsection{Deviation of \eqref{eqn:rho} and its solution}
	For a given $r(\bx;\btheta)$, consider the following minimization problem:
	\begin{equation*}
		\min_{p_{\balpha}>0}\mathcal{L}(p_{\balpha})=\beta\int_{\Omega}|\nabla_{\bx}p_{\balpha}|^2dx-\int_{\Omega}r^2(\bx;\btheta)p_{\balpha}(\bx)dx+\lambda \left(\int_{\Omega}p_{\balpha}(dx)-1 \right),
	\end{equation*}
	where the positivity of $p_{\balpha}$ is guaranteed by the KRnet and $\lambda$ is the Lagrange multiplier for the mass conservation of PDF. Assuming that $\frac{\partial p_{\balpha}}{\partial\bn}=0$ on the boundary $\partial \Omega$, where $\bn$ is a unit normal vector on $\partial\Omega$ pointing outward. We have the first-order variation of $\mathcal{L}(p_{\balpha})$ for a perturbation function 
	$\delta p(\bx)$
	\begin{align*}
		\delta\mathcal{L}=&2\beta\int_{\Omega}\nabla p_{\balpha}\cdot\nabla \delta p d\bx-\int_{\Omega}r^2\delta pd\bx+\lambda\int_{\Omega}\delta pd\bx\\
		=&2\beta\left(\int_{\partial \Omega}\delta p\nabla p_{\balpha}\cdot\bn d\Gamma-\int_{\Omega}\delta p\nabla^2 p_{\balpha}d\bx\right)-\int_{\Omega}r^2\delta pd\bx+\lambda\int_{\Omega}\delta p(\bx)d\bx\\
		=&-2\beta \int_{\Omega}\delta p\nabla^2 p_{\balpha}d\bx-\int_{\Omega}r^2\delta pd\bx+\lambda\int_{\Omega}\delta p(\bx)d\bx\\
		=&-\int_{\Omega}(2\beta\nabla^2 p_{\balpha}+r^2-\lambda)\delta pd\bx,
	\end{align*}
	where we applied integration by parts and the homogeneous Neuman boundary conditions. The optimality condition $\frac{\delta \mathcal{L}}{\delta p}=0$ yields 
	\begin{equation}
		\left\{
		\begin{array}{rl}
			2\beta\nabla^2p_{\balpha}(\bx)+r^2(\bx;\btheta)-\lambda=0,&\bx\in \Omega,\\
			\frac{\partial p_{\balpha}}{\partial\bn}=0,&\bx\in\partial \Omega. 
		\end{array}
		\right.
	\end{equation}
	From the compatibility condition for Neumann problems, we have 
	\begin{equation}
		\int_{\Omega}(r^2(\bx;\btheta)-\lambda)d\bx=0,
	\end{equation}
	which yields that
	\[
	\lambda=\frac{1}{|\Omega|}\int_{\Omega}r^2(\bx;\btheta)d\bx.
	\]
	
	Assume that $\Omega$ is a bounded domain with smooth boundary. It can be shown that if $r\in H^k(\Omega)$ and $\partial \Omega\in C^{k+2}$ with $k\in\mathbb{N}$, the solution of \eqref{eqn:rho} satisfies \citep{taylorPDE}
	\[
	\|p_{\balpha}\|_{H^{k+2}(\Omega)}\leq C\|f\|_{H^k(\Omega)},
	\]
	where $f(\bx)=(\lambda-r^2)/(2\beta)$  and $C>0$ is a general constant that does not depend on $r$. According to the Sobolev Imbedding Theorem \citep{adamsSS}, 
	\[
	W^{k, 1}(\Omega)\rightarrow C^{0,1}(\overline{\Omega}),
	\]
	when $D=k-1$. Thus up to a set of measure zero, we have
	\[
	\|p_{\balpha}\|_{C^{0,1}(\overline{\Omega})}\leq C_1\|p_{\balpha}\|_{W^{k,1}(\Omega)}\leq C_2\|p_{\balpha}\|_{H^k(\Omega)},
	\]
	where $C_1$ and $C_2$ are general constants independent of $p_{\balpha}$. So $p_{\balpha}$ is Lipschitz continuous when the boundary and $r(\bx)$ are sufficiently smooth. However, this also means that the $H_1$ regularization used in \eqref{vanilla-minmax-s} induces a weaker constraint than the Lipschitz condition in Lemma \ref{lem:lipschitz_bound}.
	
	\subsection{Supplementary experiments}\label{sec_supp_exprm}
	\textbf{About the setting of $s(\mb{x})$ and $g(\mb{x})$}. The source term $s(\mb{x})$ is derived by the exact solution, i.e., we can set the source function by plugging the exact solution into the equation to get $s(\mb{x})$. We set $g(\mb{x}) = u(\mb{x})$ since the Dirichlet boundary condition is imposed on $\partial \Omega$.

	\textbf{Parametric Burgers' Equation}.
	We also test the proposed AAS method using parametric PDEs that are commonly used in the design of engineering systems and uncertainty quantification. Specifically, we consider the following parametric Burgers' equation, which is a benchmark problem studied in DeepXDE. 
	\begin{equation*}
		\begin{aligned}
			\frac{\partial u}{\partial t} + u \frac{\partial u}{\partial x} + v \frac{\partial u}{\partial y} &= \nu [(\frac{\partial u}{\partial x})^2  + (\frac{\partial u}{\partial y})^2] \\
			\frac{\partial v}{\partial t} + u \frac{\partial v}{\partial x} + v \frac{\partial v}{\partial y} &= \nu [(\frac{\partial v}{\partial x})^2  + (\frac{\partial v}{\partial y})^2]\\
			x, y \in [0,1], &\ \text{and} \ t \in [0,1]
		\end{aligned}
	\end{equation*}
	where $u$ and $v$ are the velocities along $x$ and $y$ directions respectively, and $\nu \in (0, 1]$ is a parameter that represents the kinematic
	viscosity of fluid. Here, the Dirichlet boundary conditions are imposed on all boundaries. The exact solution is obtained as follows.
	\begin{equation*}
		\begin{aligned}
			u(x,y,t) &= \frac{3}{4} - \frac{1}{4[1 + \mathrm{exp}((-4x + 4y - t)/(32\nu))]},\\
			v(x,y,t) &= \frac{3}{4} + \frac{1}{4[1 + \mathrm{exp}((-4x + 4y - t)/(32\nu))]},
		\end{aligned}
	\end{equation*}
	The problem setup space is $\mb{x} = [t,x,y,\nu]$, i.e., $D = 4$. When $\nu$ is small, solving this problem is quite challenging. We use the proposed AAS method to train a neural network $u_{\mb{\theta}}(\mb{x})$ to approximate the solution over the entire space $\mb{x} = [t,x,y,\nu] \in [0,1]^4$. Figure \ref{fig:error_var_burgers} shows the numerical results, which demonstrate that the proposed AAS method is able to accurately solve this parametric Burgers' equation. We can train the models using the strategy as discussed in Remark 2, i.e., we gradually add the data points to the current training set. AAS with fixed $\beta=5$ means that we use a similar training strategy as DAS-G presented in \citep{tangdas} with a fixed $\beta$, while AAS with decay $\beta=5$ means that $\beta$ has a decay scheme at every $100$ stages with decay rate $0.9$. Adding the data points gradually to the current set of random samples is more stable than that of replacing all data points.
	
	\begin{figure}[ht]
		\center{
			\includegraphics[width=0.42\textwidth]{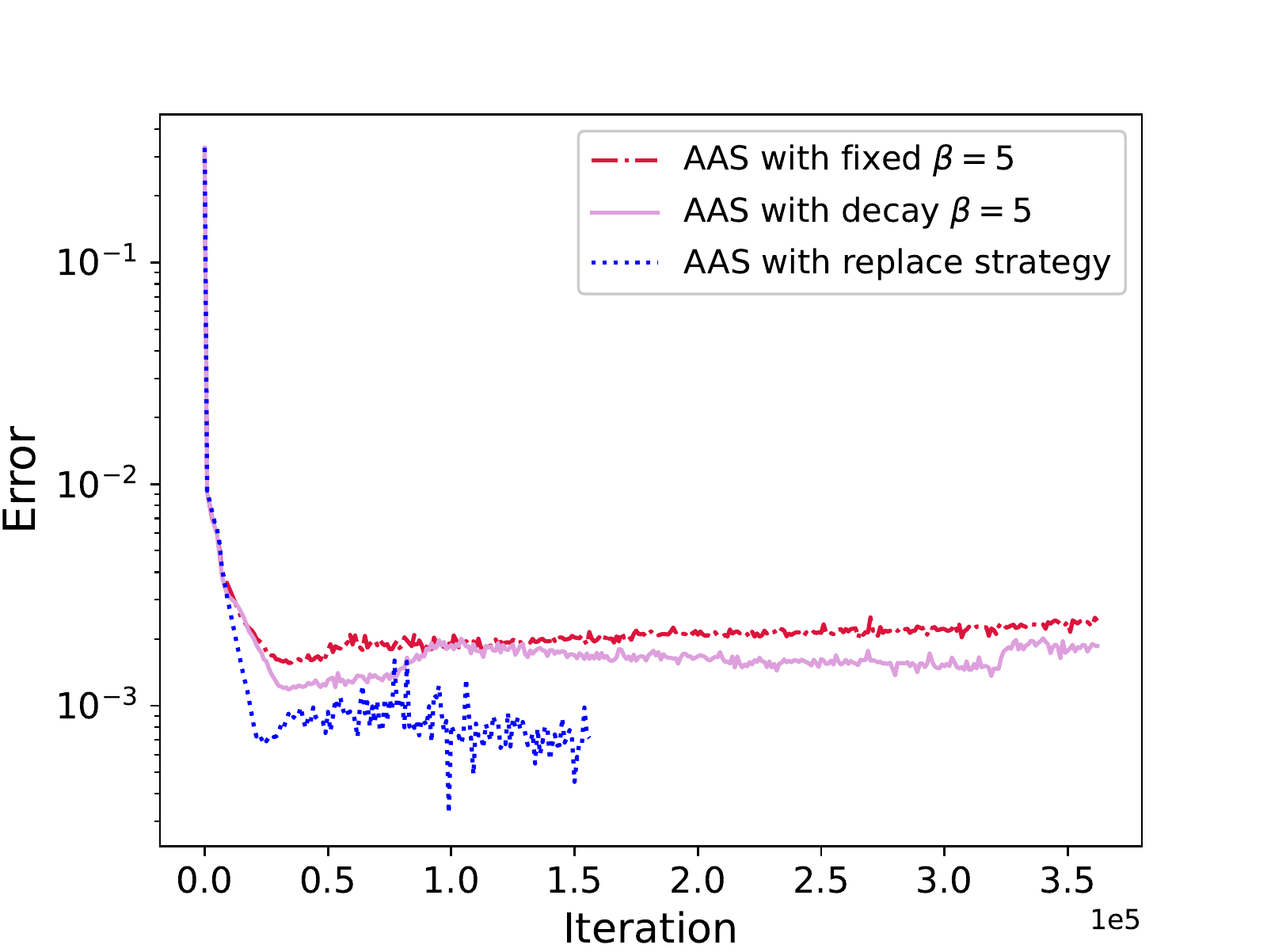}
			\includegraphics[width=0.42\textwidth]{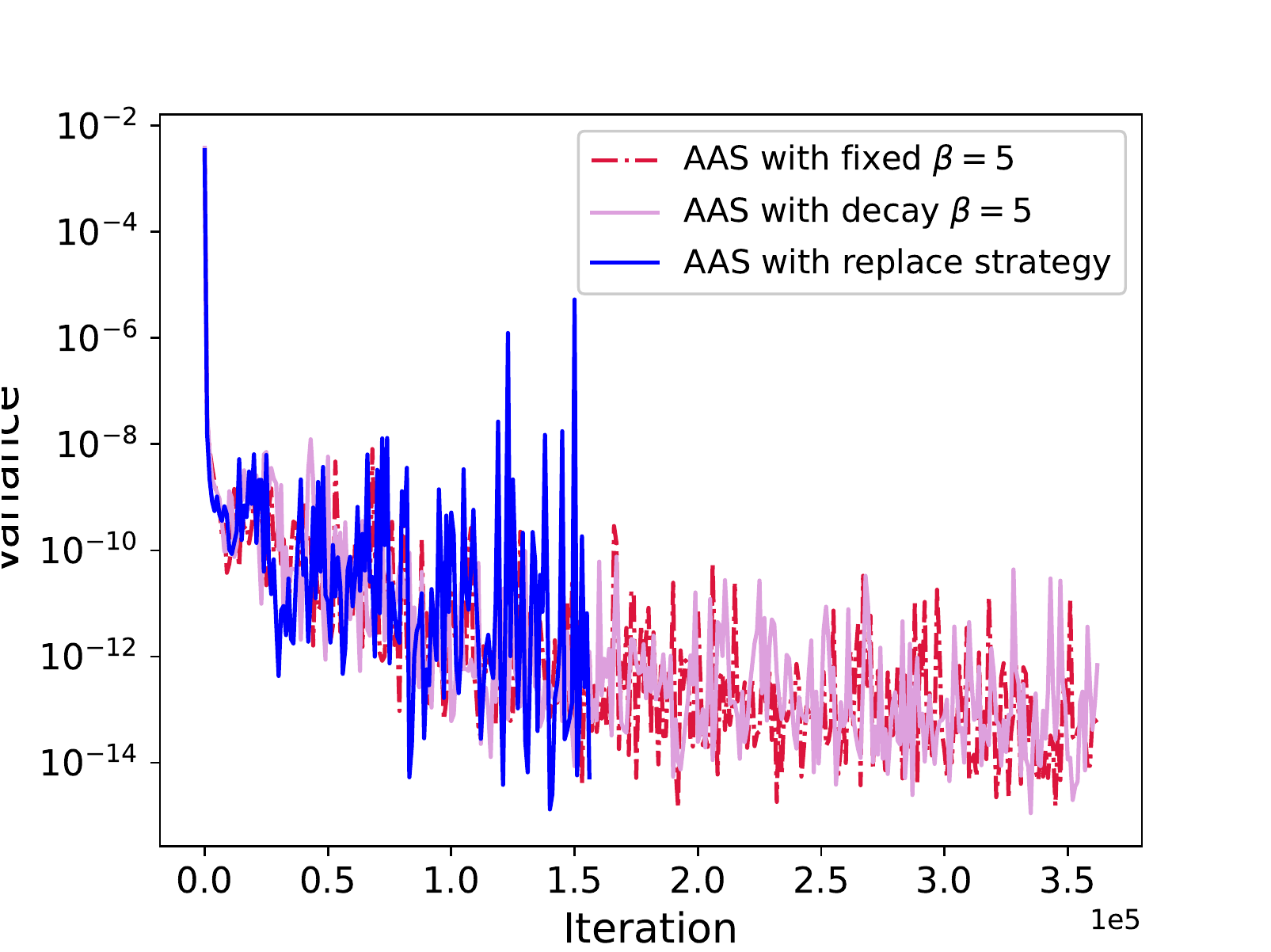}
		}
		\caption{\rev{The results of the parametric Burgers' equation. Left: The error behavior. Right: The evolution of the variance. }}
		\label{fig:error_var_burgers}
	\end{figure}
	
}

\end{document}